\def\c{{\bm c}}
\def\e{{\bm e}}
\def\x{{\bm x}}
\def\y{{\bm y}}
\def\0{{\bm 0}}
\def\1{{\bm 1}}
\def\Rbb{\mathbb{R}}
\def\Nbb{\mathbb{N}}
\def\sign{{\mathrm{sign}}}
\newtheorem{thm}{Theorem}
\newtheorem{lem}{Lemma}
\newtheorem{assmp}{Assumption}
\newcommand{\figcaption}[1]{\def\@captype{figure}\caption{#1}}
\newcommand{\tblcaption}[1]{\def\@captype{table}\caption{#1}}
\def\z{\bm{z}}
\def\<{\langle}
\def\>{\rangle}
\def\bxi{\bm{\xi}}
\def\balpha{{\bm{\alpha}}}
\title{Variable Selection for Nonparametric Learning \\ with Power Series Kernels}
\author[1]{Kota Matsui\thanks{kota.matsui@riken.jp}}
\author[1]{Wataru Kumagai}
\author[2]{Kenta Kanamori}
\author[3]{Mitsuaki Nishikimi}
\author[4,1]{Takafumi Kanamori}
\affil[1]{Center for Advanced Intelligence Project, RIKEN}
\affil[2]{Department of Computer Science, Nagoya Institute of Technology}
\affil[3]{Department of Emergency and Critical Care, Nagoya University}
\affil[4]{Department of Mathmatical and Computing Science, Tokyo Institute of Technology}
\begin{document}

\maketitle

\begin{abstract}
  In this paper, we propose a variable selection method for general nonparametric kernel-based estimation. 
The proposed method consists of two-stage estimation: (1) construct a consistent estimator of 
the target function,  
(2) approximate the estimator using a few variables by $\ell_1$-type penalized estimation.
 We see that the proposed method can be applied to various kernel nonparametric 
estimation such as kernel ridge regression, kernel-based density and density-ratio estimation. 
We prove that the proposed method has the property of the variable selection consistency 
when the power series kernel is used. 
Here the power series kernel is a certain class of kernels containing polynomial kernel and 
exponential kernel.
This result is regarded as an extension of the variable selection consistency
for the non-negative garrote, which is a special case of the adaptive lasso,
to the kernel-based estimators. 
Several experiments including simulation studies and 
real data applications show the effectiveness of the proposed method. 
\end{abstract}




\section{Introduction}
\label{intro}

Variable selection is quite important in various machine learning tasks, to improve the performance, 
select more cost-effective subset of the features and guarantee the interpretability~\cite{guyon2003introduction}. 
Many variable selection methods has been developed for parametric learning including linear models. 
On the other hand, it is desirable to be able to do variable selection in nonparametric learning 
although such study has been done only in very limited problems. 
As a motivating example, in the clinical prognostic problem at multiple 
hospitals~\cite{nishikimi2017novel}, learning of prediction models that will be used at target hospital is performed 
using integrated data from multiple hospitals.
However, since the distribution of covariates among hospitals can differ,  
adaptation
by importance weighting using density-ratio~\cite{sugiyama2012density} is required to appropriately 
learn a model. 
Although the density-ratio estimation can be done accurately by the kernel method~\cite{kanamori12:_statis}, 
all covariates must be observed at all hospitals. 
Since this is very expensive to use, it is natural that we want to select the variables which contribute to the 
density-ratio in advance.
In this paper, we propose a general variable selection method for kernel-based nonparametric learning.

In this paper, we consider kernel methods not only for regression problems but also
density ratio estimation, density estimation, density-ridge estimation, etc. 
In the kernel methods, we employ the power series kernel~\cite{zwicknagl09:_power_series_kernel}. 
We show that the power series kernel has a desirable property, i.e., the invariant property under the variable
scaling on the corresponding reproducing kernel Hilbert space. 
We prove the variable selection consistency of the kernel methods using the power series kernel and NNG
under milder assumptions than \cite{zaili16:_flexib_variab_selec_recov_spars} in which 
a kernel variant of the irrepresentable condition is assumed. 
Our result is regarded as an extension of the variable selection consistency for the original
NNG~\cite[Corollary 2]{zou2006adaptive} or adaptive lasso to the kernel-based estimators. 

The rest of the paper is organized as follows. In Section~\ref{Problem_Setup}, we explain the problem setup and 
give some definitions. Several kernel nonparametric learning problems are formulated in an unified way. 
Section~\ref{main} provides the explanation of the proposed method. 
Section~\ref{theory} is devoted to the main results. The variable selection consistency is shown in this section. 
In Section~\ref{experiment}, we show the experimental results for both synthetic and real datasets. 
Finally in Section~\ref{conclusion}, we conclude the paper with the discussion on future works.

\subsection{Related Work}
The conventional approach to the variable selection was the information criterion such as AIC or the
sequential test called forward and backward step-wise selection. 
For high-dimensional models, however, the information criterion does not work because of the computational issue, 
i.e., the combinatorial complexity appears in the choice of variables. 
Also, the statistical test needs to repeat the computation of sample statistics many times.
For high-dimensional linear models, \cite{tibshirani1996regression} proposed the so-called Lasso estimator, 
in which the $\ell_1$-norm of the coefficients was incorporated into the squared loss. 

Also, the non-negative garrote (NNG) was proposed by \cite{breiman1995better} as a modification
of the standard least square (LS) estimator for linear models.
In the NNG, each coefficient of the LS estimator is shrunk towards zero,
and its intensity is controlled by the $\ell_1$-regularization. 
The variable selection consistency of Lasso was proved under the so-called irrepresentable
condition~\cite{peng06:_model_selec_consis_lasso,Wainwright:2009:STH:1669487.1669506}. 
On the other hand, \cite{zou2006adaptive} and \cite{yuan07} proved that the NNG has the variable selection consistency 
without the irrepresentable condition.
For some kernel-based estimators, 
\cite{allen13:_autom_featur_selec_weigh_kernel_regul}
and
\cite{grandvalet03:_adapt_scalin_featur_selec_svms}
employed the NNG as the adaptive scaling for the variable selection. 
Also, \cite{zaili16:_flexib_variab_selec_recov_spars} proved the variable selection consistency
of the scaled kernel-ridge estimator under a variant of the irrepresentable condition. 

Rosasco et al.~\cite{rosasco2013nonparametric} considered the variable selection problem 
in the kernel ridge regressions. 
Instead of the scaling parameters, the authors proposed the regularization based on the 
derivatives. 
They proved only the selected variables include the target variables with high probability. 
However, it is not clear whether the extra variables can be removed with high probability. 
As the author of~\cite{rosasco2013nonparametric} pointed out, the variable selection 
consistency is not completely proved, and that was postponed to the future work. 

Several literature deal with similar but different problem setting \cite{feng2016kernelized, roth2004generalized, salzo2017solving, wang2007kernel}. 
In these papers, the sparsity is incorporated into the coefficients in the linear sum 
of the kernel functions for the kernel learning. Hence, the variable selection concerning the
covariates was out of the scope of these papers.

\section{Problem Setup}
\label{Problem_Setup}

We briefly introduce kernel methods. See \cite{berlinet2003reproducing,steinwart2008support} for details. 
Let $\mathcal{Z}$ be the domain of the $d$-dimensional data
and $k:\mathcal{Z}\times\mathcal{Z}\rightarrow\Rbb$ be a kernel function.
The reproducing kernel Hilbert space (RKHS) associated with $k$ is denoted as $\mathcal{H}$ or $\mathcal{H}_k$. 
The RKHS is a linear space consisting of real-valued functions on $\mathcal{Z}$
which is used as the statistical model. 
The inner product and the norm on $\mathcal{H}$ are represented as $\<f,g\>$ and $\|f\|=\sqrt{\<f,f\>}$ for
$f,g\in\mathcal{H}$.

In many learning algorithms, the target function is estimated by minimizing an empirical loss function $\widehat{L}(f)$  with a regularization term $\lambda R(f)$ as follows:
\begin{align}
\label{eqn:general-kernel-based-estimator}
\min_{f\in\mathcal{H}}\widehat{L}(f)+\lambda R(f), 
\end{align}
where $\widehat{L}(f)$ depends on training data and $\lambda$ is a positive parameter controlling the capacity of the statistical model.
Throughout the paper, we assume that
$R(f)=\frac{1}{2}\|f\|^2$ and
the empirical loss is expressed as the quadratic form, 
\begin{align}
  \label{eqn:quadratic-loss}
 \widehat{L}(f)=\frac{1}{2}\<f,\widehat{C}f\>-\<\widehat{g},f\>, 
 \end{align}
where a linear operator $\widehat{C}:\mathcal{H}\rightarrow\mathcal{H}$ and
an element $\widehat{g}\in\mathcal{H}$ depend on training data.
Several examples are shown below. 
Suppose that $\widehat{L}(f)$ converges to $L(f)$ in probability for each $f\in\mathcal{H}$ as the sample size 
goes to infinity. We assume that $L(f)$ has the form of 
\begin{align}
  \label{eqn:expected-quadratic-loss}
 L(f)= \frac{1}{2}\<f,Cf\>-\<g,f\>, 
\end{align}
where $C:\mathcal{H}\rightarrow\mathcal{H}$ and $g\in\mathcal{H}$ will depend on the probability distribution of the training data. 
Suppose that the target function $f^*\in\mathcal{H}$ is the minimum solution of \eqref{eqn:expected-quadratic-loss}. 

In this paper we focus on the variable selection problem for kernel methods. 
The kernel-based estimator $\widehat{f}(\x)$ usually depends on all variables in $\x=(x_1,\ldots,x_d)$. 
However, the target function $f^*$ may depend on only a few variables. The goal of the variable selection is to detect these variables. 


\subsection{Kernel-Ridge estimator}
\label{subsection:Kernel-ridge_Estimator}
In regression problems, i.i.d. samples $(\x_1,y_1),\ldots,(\x_n,y_n)\sim p(\x,y)=p(y|\x)p(\x)$ are assumed
to be generated from the model $y_i=f^*(\x_i)+\varepsilon_i$, where $f^*$ is the true regression function 
and $\varepsilon_i$ is i.i.d. random observation noise. 
The variable selection is important in order to model the relationship between $\x$ and $y$. 

The RKHS $\mathcal{H}$ is commonly used as the statistical model. 
The expected squared loss and empirical squared loss for $f\in\mathcal{H}$ are respectively defined as 
$\frac{1}{2}\int (y-f(\x))^2 p(\x,y) d{\x}dy$ and
$\frac{1}{2n}\sum_{i=1}^{n}(y_i-f(\x_i))^2$.
Let us define $C=\int k(\x,\cdot)\otimes{k(\x,\cdot)}p(\x)d\x$ and $g=\int y k(\x,\cdot) p(\x,y)d{\x}dy\in\mathcal{H}$. 
Here, $\otimes$ denotes the tensor product defined as $\<g\otimes{h}, f\>=\<h,f\>g$ for $f,g,h\in\mathcal{H}$.
Then, the loss function $L(f)=\frac{1}{2}\<f,Cf\>-\<g,f\>$ equals the expected squared  loss up to constant terms.
Likewise, $\widehat{L}(f)$ is expressed as $\frac{1}{2}\<f,\widehat{C}f\>-\<\widehat{g},f\>$ 
using $\widehat{C}=\frac{1}{n}\sum_{i=1}^{n}k(\x_i,\cdot)\otimes{k(\x_i,\cdot)}$ 
and $\widehat{g}=\frac{1}{n}\sum_{i=1}^{n}y_i k(\x_i,\cdot)\in\mathcal{H}$.
The kernel-ridge estimator $\widehat{f}$ is given as the minimizer of 
$\widehat{L}(f)+\frac{\lambda}{2}\|f\|^2$ subject to $f\in\mathcal{H}$. 
It is well-known that $\|\widehat{f}-f^*\|=o_P(1)$ holds when the output $y$ is bounded and the regularization parameter $\lambda$
is appropriately chosen;  the detail is found in the proof of Theorem~4 in~\cite{Caponnetto:2007:ORR:1290530.1290534}. 

\subsection{Kernel-based density-ratio estimator}
\label{density-ratio-est}

Density ratio is defined as the ratio of two probability densities~\cite{sugiyama2012density}. 
The density ratio is an important versatile tool in statistics and machine learning, 
since it appears in many learning problems including regression problems under the covariate-shift, two-sample
test, outlier detection, etc.
Suppose that we have training samples, $\x_1,\ldots,\x_{n_1}\sim_{i.i.d.} p$ and 
$\y_1,\ldots,\y_{n_2}\sim_{i.i.d.} q$, where $p$ and $q$ are probability densities on the domain $\mathcal{Z}$. 
Our goal is to estimate $f^*(\z)=q(\z)/p(\z)$. 

Let us consider the variable selection of the density ratio. 
For the vector $\z=(\z_a, \z_b)$, $p(\z)$ and $q(\z)$ are decomposed into conditional probabilities and
marginal ones such as $p(\z_a|\z_b) p(\z_b)$ and $q(\z_a|\z_b) q(\z_b)$. 
When $p(\z_a|\z_b)=q(\z_a|\z_b)$ holds, $q(\z)/p(\z)$ is reduced to $q(\z_b)/p(\z_b)$. 
The variable selection of the density ratio is closely related to the 
identification of the conditional probability that $p$ and $q$ have in common. 

A kernel-based density-ratio estimator called KuLSIF was proposed in~\cite{kanamori12:_statis}. 
The empirical loss of $f$ is defined by $\widehat{L}(f)=\frac{1}{2n_1}\sum_{i=1}^{n_1}f(\x_i)^2-\frac{1}{n_2}\sum_{j=1}^{n_2}f(\y_j)$. 
As $n=\min\{n_1,\,n_2\}$ tends to infinity, the empirical loss converges to 
$L(f)=\int_{\mathcal{Z}} \{\frac{1}{2}f(\z)^2p(\z)-f(\z) q(\z)\}d\z$ due to the law of large numbers. 
The minimizer of $L(f)$ is nothing but $f^*=q/p$.
The empirical quadratic loss is expressed by 
$\widehat{C}=\frac{1}{n_1}\sum_{i=1}^{n_1}k(\x_i,\cdot)\otimes k(\x_i,\cdot)$ and 
$\widehat{g}=\frac{1}{n_2}\sum_{j=1}^{n_2}k(\y_j,\cdot)$, and 
the expected loss $L(f)$ by $C=\int\! k(\x,\cdot)\otimes k(\x,\cdot) p(\x) d\x$ and
$g=\int\! k(\y,\cdot)q(\y)d\y$ up to constant terms.
\cite{kanamori12:_statis} proved the statistical consistency of the KuLSIF estimator in the $L_2$ norm.
In Appendix~\ref{Consistency_KuLSIF}, 
we prove the statistical consistency in the RKHS norm.


\subsection{Kernel-based density estimator and density-ridge estimator}

The kernel-based density estimator using infinite dimensional exponential models has been studied
in~\cite{JMLR:v18:16-011}. The problem is to estimate the function $f^*$ of the probability density 
$p(\z)\propto\exp(f^*(\z))$ using the i.i.d. samples $\z_1,\ldots,\z_n\in\mathcal{Z}$ 
from $p(\z)$. The variable selection of the probability density on the bounded domain $\mathcal{Z}$ 
is nothing but the identification of variables whose marginal probability is the uniform distribution on the domain. 

The estimator $\widehat{f}$ of $f^*$ is obtained by minimizing the Hyv\"{a}rinen score with the regularization. 
Let $\partial_a f$ be the derivative of the function $f(\z)$ w.r.t. $z_a$, i.e., 
$\frac{\partial{f}}{\partial{z}_a}(\z)$. Likewise, $\partial_{aa}f$ denotes the second-order derivative of $k$ w.r.t. the $a$-th argument. 
The empirical loss $\widehat{L}(f)$ for the Hyv\"{a}rinen score is defined from 
$\widehat{C}=\frac{1}{n}\sum_{i=1}^{n}\sum_{a=1}^{d}\partial_{a}k(\z_i,\cdot)\otimes \partial_{a}k(\z_i,\cdot)$ and 
$\widehat{g}=-\frac{1}{n}\sum_{i=1}^{n}\sum_{a=1}^{d}\partial_{aa}k(\z_i,\cdot)$.
The reproducing property for the derivative, $\partial_{a}f(\z)=\<f,\partial_{a}k(\z,\cdot)\>$,
is useful to conduct the calculation of the operator $\widehat{C}$. 
Note that $\partial_{a}k(\z,\cdot)\in\mathcal{H}$ holds for the kernel function $k$~\cite{zhou08:_deriv}. 
Likewise, the expected quadratic loss $L(f)$ is expressed by 
$C=\int\sum_{a=1}^{d}\partial_{a}k(\z,\cdot)\otimes\partial_{a} k(\z,\cdot)p(\z)d\z$ and 
$g=-\int\sum_{a=1}^{d}\partial_{aa}k(\z,\cdot)\,p(\z)d\z\in\mathcal{H}$. 
When a proper boundary condition is assumed,
the expected quadratic loss derived from $C$ and $g$ equals
$\frac{1}{2}\int \sum_{a=1}^d|\partial_{a}f(\z)-\partial_{a}f^*(\z)|^2 p(\z) d\z$ 
up to constant terms. Thus, the Hyv\"{a}rinen score is regarded as the mean square error for derivatives. 
The estimator using the Hyv\"{a}rinen score and an appropriate regularization parameter $\lambda$
has the statistical consistency $\|\widehat{f}-f^*\|=o_P(1)$ under 
a mild condition~\cite{JMLR:v18:16-011}. 

The density-ridge estimator is related to the above density estimator. 
The target is to estimate $f^*(\z)=\frac{\partial_{I}p(\z)}{p(\z)}$ using i.i.d. samples from $p(\z)$, where
$\partial_I$ is the differential operator  
$\frac{\partial^{k}}{\partial{z_{i_1}},\ldots,\partial{z_{i_k}}}$ with the set (or multiset) of non-negative
integers $I=\{i_1,\ldots,i_k\}$. 
The above density estimator corresponds to the case that the set $I$ is a singleton.
The ingredients of the quadratic loss, $\widehat{C},\widehat{g}, C$ and $g$, are defined
from the derivative of the kernel function, $\partial_{I}k(\z,\cdot)$. The estimated function 
$\widehat{f}$ is used to extract the ``ridge structure'' of the probability density 
$p(\z)$ that usually has a complex low-dimensional structure. 
The variable selection for the density-ridge estimator is important
to boost the estimation accuracy of the ridge and to reduce the computational cost. 


\section{Variable Selection using Adaptive Scaling with Power Series Kernels}
\label{main}

\subsection{Kernel Methods with Adaptive Scaling}
For variable selection, we incorporate adaptive scaling parameters to the variables in the RKHS model. 
As the adaptive scaling parameter, we employ Breiman's Non-Negative Garrote (NNG)~\cite{breiman1995better}. 
The original NNG is used to estimate the sparse vector $\bm{\beta}$ of the linear regression model
$y=\x^T{\bm\beta}+\epsilon$, where $\epsilon$ is the observation error. 
The least mean square estimator $\widehat{{\bm\beta}}_0$ is mapped to 
${\bm\xi}\circ\widehat{{\bm\beta}}_{0}$, where 
the non-negative parameter ${\bm\xi}$ is called the garrote parameter
and the operator $\circ$ denotes the element-wise product of two vectors, i.e., the Hadamard product. 
The optimal garrote parameter $\widehat{\bm{\xi}}$ is found by minimizing the empirical squared loss
with the non-negative constraint $\bm{\xi}\geq\0$ and the $\ell_1$-regularization $\|\bm{\xi}\|_1\leq c$. 
Eventually, the sparse vector $\widehat{\bm{\xi}}\circ\widehat{\bm{\beta}}_0$ is obtained
as the estimator of the coefficient vector. 

We incorporate the NNG into kernel methods. 
In order to induce the sparse estimator, the NNG seems to be more adequate than lasso-type estimator, since
lasso estimator is basically available to the feature selection of linear regression 
models~\cite{JRSS:Tibshirani:1996}. 
The linear model with the garrote parameter 
is expressed as $(\bm{\xi}\circ\x)^T\widehat{\bm{\beta}}_0$. 
Likewise, given the kernel-based estimator $f(\z)$ for $f\in\mathcal{H}$, 
the garrote parameter $\bm{\xi}$ is introduced as the form of $f_{\bm{\xi}}(\z):=f(\bm{\xi}\circ\z)$. 
Both $f\in\mathcal{H}$ and $\bm{\xi}$ can be found by minimizing the empirical loss  
in which $f$ is replaced with $f_{\bm{\xi}}$. 
Here, we propose a simplified two-stage kernel-based estimator with NNG. 
The detail is presented in Algorithm~\ref{alg:DRNNG}.  
In the algorithm, $\eta$ is a positive regularization parameter that controls the sparsity of the variable
selection. Using the representer theorem, the estimator is typically expressed as 
$\widehat{f}_{\widehat{\bxi}}(\z)=\sum_i \alpha_{i}k(\z_i,\widehat{\bxi}\circ\z)$,
where $\z_i$ is a data point and $\alpha_i$ is the estimated parameter in Step 1. 

\begin{algorithm}[tb]
   \caption{Two-stage kernel-based estimator with NNG. } 
   \label{alg:DRNNG}
\begin{algorithmic}
 \STATE {\bfseries Input:} Training samples, and regularization parameters, $\lambda$ and $\eta$. 
 \STATE {\bfseries Step 1:} Find the kernel-based estimator $\widehat{f}$ by solving
 \eqref{eqn:general-kernel-based-estimator}. 
 \STATE {\bfseries Step 2:} Let us define $\widehat{f}_{\bm{\xi}}(\z)=\widehat{f}(\bm{\xi}\circ\z)$. 
 Find the optimal garrote parameter $\widehat{\bm{\xi}}$ by solving 
 \begin{align*}
  &\min_{\bm{\xi}}
  \widehat{L}(\widehat{f}_{\bm{\xi}})+\eta\|\bm{\xi}\|_1,\quad \mathrm{s.t.}\ \ \bm{\xi}\in[0,1]^d. 
 \end{align*}
 \vspace*{-4mm}
 \STATE {\bfseries Output:} The estimator $\widehat{f}_{\widehat{\bm{\xi}}}(\z)$. 
\end{algorithmic}
\end{algorithm}

In the learning algorithm, $\bm{\xi}$ is optimized under the box constraint
$\bm{\xi}=(\xi_1,\ldots,\xi_d)\in[0,1]^d$. 
Here, we introduce the upper constraint $\bm{\xi}\leq\1$ that does not appear in the original NNG.
This is because we need the contraction condition in Section~\ref{subsec:PowerSeriesKernels}
to ensure that the domain of $f_{\bxi}$ is properly defined from that of $f$. 
The estimated function $\widehat{f}_{\widehat{\bxi}}$ depends only on the variables having positive garrote parameter. 
In step~2 of Algorithm~\ref{alg:DRNNG}, one can use the standard optimization methods such as the limited-memory BFGS method
with the box-constraint. Usually, the objective function is not convex.
Practical methods including the multi-start technique should be implemented. 


\subsection{Power Series Kernels and its Invariant Property}
\label{subsec:PowerSeriesKernels}
The statistical model of the learning algorithm is expressed as
$\widetilde{H}=\cup_{\bm{\xi}\in[0,1]^d}\mathcal{H}_{\bm{\xi}}$,  
where $\mathcal{H}_{\bm{\xi}}=\{f_{\bm{\xi}}(\z)\,:\,f\in\mathcal{H}\}$, 
i.e., the multiple kernel model is employed in our method.
In what follows, we assume that the domain of the data, $\mathcal{Z}$, is a compact set
included in $(-1,1)^d$ and satisfies the \emph{contraction condition}, 
${\bm\xi}\circ\mathcal{Z}:=\{{\bm\xi}\circ\z:\z\in\mathcal{Z}\}\subset\mathcal{Z}$ for any $\bm{\xi}\in[0,1]^d$. 
Due to this condition, $f_{\bxi}$ is properly defined without expanding the domain of $f$. 

When the invariant property, $\mathcal{H}_{\bm{\xi}}\subset\mathcal{H}$, holds for all $\bxi\in[0,1]^d$,
we have $\widetilde{\mathcal{H}}=\mathcal{H}$. 
As a result, we can circumvent the computation of multiple kernels. 
For example, the RKHS endowed with the polynomial kernel $k(\x,\z)=(1+\x^T\z)^\ell,\,\ell\in\mathbb{N}$,
agrees with this condition. 
Also, the exponential kernel $k(\x,\z)=\exp(\gamma\x^T\z), \gamma>0$ which is a universal kernel
\cite{steinwart2008support} has the same property. 
On the other hand, the invariant property does not hold for the Gaussian kernel, since the constant function obtained by setting $\bm{\xi}=\0$ is 
not included in the corresponding RKHS~\cite[Corollary~4.44]{steinwart2008support}. 
As the result, we find $\mathcal{H}\neq \widetilde{\mathcal{H}}$ for the Gaussian kernel. 

In general, the class of power series kernels (PS-kernels) \cite{zwicknagl09:_power_series_kernel} is 
available in our method. 
The power series kernel $k(\x,\y)$ for $\x,\y\in\mathcal{Z}\subset(-1,1)^d$ is defined as the power series,
i.e., $k(\x,\y)=\sum_{\balpha\in\mathbb{N}_0^d} w_\balpha{\x^\balpha \y^\balpha}/(\balpha!)^2$, 
where $\x^\balpha=x_1^{\alpha_1}\cdots x_d^{\alpha_d},\, \balpha!=\alpha_1!\cdots\alpha_d!$ and 
$\mathbb{N}_0$ is the set of all non-negative integers. The multi-index sequence $w_{\balpha}$ consists of 
non-negative numbers such that $\sum_{\balpha\in\mathbb{N}_0^d}w_{\balpha}/(\balpha!)^2<\infty$. 
The polynomial and exponential kernels are included in the class of the PS-kernels. 
The native space is defined as $\mathcal{H}_k=\{f(\x)=\sum_{\balpha}c_\balpha \x^\balpha \,\big|\, 
\sum_{\balpha}(\balpha!)^2 c_\balpha^2/w_{\balpha}<\infty\}$,
where $\balpha\in\mathbb{N}_0^d$ in the summation runs on multi-indices with  $w_{\balpha}>0$. 
As shown in \cite{zwicknagl09:_power_series_kernel},  $\mathcal{H}_k$ is the RKHS with the inner product
$\<f,g\>=\sum_{\balpha}(\balpha!)^2{c_{\balpha} d_{\balpha}}/w_{\balpha}$ for
$f(\x)=\sum_{\balpha}c_{\balpha}\x^\balpha$ and $g(\x)=\sum_{\balpha}d_{\balpha}\x^\balpha$.
The invariant property holds for the RKHS endowed with the power series kernel. 
Indeed, for $f(\x)=\sum_{\balpha\in\mathbb{N}_0^d}c_\balpha \x^\balpha\in\mathcal{H}_k$, 
the coefficients of $f_{\bxi}$ are given as $c_\balpha\cdot\bxi^\balpha,\, \balpha\in\mathbb{N}_0^d$. 
Since $\bxi\in[0,1]^d$, we have $|c_\balpha\cdot\bxi^\balpha|\leq |c_\balpha|$.
Thus, $f_{\bxi}\in\mathcal{H}_k$ holds. Moreover we have $\|f_{\bxi}\|\leq \|f\|$. 
Some other properties of the power series kernels are presented in 
Appendix~\ref{Properties_PSKernels}. 


\section{Theoretical Results : Variable Selection Consistency}
\label{theory}
Let us consider the variable selection consistency of Algorithm~\ref{alg:DRNNG}.
All proofs are found in 
Appendix~\ref{appendix:proofs}. 
Suppose that the target function $f^*(\z)$ for $\z=(z_1,\ldots,z_d)$ essentially depends on the variables $z_1,\ldots,z_s$, where $s\leq d$. 
The \emph{variable selection consistency} means that for the estimated garrote parameter
$\widehat{\bxi}=(\widehat{\xi}_1,\ldots,\widehat{\xi}_d)$, 
the probability of the event $\{j:\widehat{\xi}_j>0\}=\{1,\ldots,s\}$ tends to one as the sample size $n$ goes to infinity.

Let us define some notations and introduce some assumptions. 
\begin{assmp}
 \label{assump:uniform-consistency}
 The kernel-based estimator $\widehat{f}$ using \eqref{eqn:general-kernel-based-estimator} 
 has the property of the statistical consistency for the target $f^*\in\mathcal{H}$
 in the RKHS norm, i.e., $\|\widehat{f}-f^*\|=o_P(1)$. 
\end{assmp}
In the following we assume that the kernel function and its derivatives are bounded by the constant $\kappa>0$, i.e.,
$\sup_{\z\in\mathcal{Z}}\sqrt{k(\z,\z)} \leq \kappa$ and 
$\sup_{\z\in\mathcal{Z}}\sqrt{\partial_{a}\partial_{d+a}k(\z,\z)}\leq \kappa$ hold for $a=1,\ldots,d$. 
We use the similar inequality up to required order of derivatives. Then, 
the convergence of the RKHS norm $\|\widehat{f}-f^*\|$ leads to that of $\|\widehat{f}-f^*\|_{\infty}$ and 
$\|\partial_a\widehat{f}-\partial_{a}f^*\|_{\infty}$. 
Indeed, the generalized reproducing property \cite{zhou08:_deriv} leads to
$\|\widehat{f}-f^*\|_{\infty} \leq \sup_{\z\in\mathcal{Z}}\sqrt{k(\z,\z)}\|\widehat{f}-f^*\|$ and
$\|\partial_a\widehat{f}-\partial_{a}f^*\|_{\infty} \leq
\sup_{\z\in\mathcal{Z}}\sqrt{\partial_{a}\partial_{d+a}k(\z,\z)}\|\widehat{f}-f^*\|$. 
The same inequality holds for $\widehat{f}_{\bm\xi}$ and ${f}_{\bm\xi}^*$, since
$\|\widehat{f}_{\bm\xi}-{f}_{\bm\xi}^*\|\leq \|\widehat{f}-{f}^*\|$ holds for the PS-kernels. 

Let us define $L(\bm{\xi};f)$ (resp. $\widehat{L}(\bm{\xi};f)$) for $\bm{\xi}\in[0,1]^d$ and 
$f\in\mathcal{H}$ as $L(f_{\bm{\xi}})$ (resp. $\widehat{L}(f_{\bm{\xi}})$).
When $f(\z)$ does not depend on $z_\ell$, the derivative $\frac{\partial}{\partial\xi_\ell}L(\bm{\xi};f)$ vanishes. 
The assumption $f^*\in\mathcal{H}$ leads that the optimal solution of the problem $\min_{f\in\mathcal{H}}L(\1;f)$ with
$\1=(1,\ldots,1)\in[0,1]^d$ is $f^*$. Since the target function $f^*(\z)$ depends only on $z_1,\ldots,z_s$, 
the parameter $\bm{\xi}^*=(\bm{\xi}_1^*, \bm{\xi}_0^*)=(\1,\0)\in\Rbb^{s}\times\Rbb^{d-s}$ is 
an optimal solution of the problem $\min_{\bm{\xi}\in[0,1]^d}L(\bm{\xi};f^*)$. Let us make the following
assumptions on the above loss functions. 
\begin{assmp}
 \label{assump:lossfunc}
 (a) For any $\varepsilon>0$, $L(\bm{\xi};f^*)$ satisfies 
 $L((\bm{\xi}_1^*,\0);f^*) <
 \inf\{L((\bm{\xi}_1,\0);f^*)\,:\,\bm{\xi}_1\in[0,1]^s, \|\bm{\xi}_1-\bm{\xi}_1^*\|\geq \varepsilon\}$. 
 (b) The uniform convergence of the empirical loss
 $\sup_{\bm{\xi}\in[0,1]^d}|\widehat{L}(\bm{\xi};\widehat{f})-\widehat{L}(\bm{\xi};f^*)|=o_P(1)$
 holds for the kernel-based estimator $\widehat{f}$ and the target function $f^*$.
 Also, the uniform convergence at $f=f^*$ holds, i.e., $\sup_{\bm{\xi}\in[0,1]^d}|\widehat{L}_n(\bm{\xi};f^*)-L(\bm{\xi};f^*)|=o_P(1)$. 
\end{assmp}
\begin{assmp}
 \label{assump:loss_deriv}
For the kernel-based estimator $\widehat{f}$ and the target function $f^*$, 
(a) the derivative of the empirical loss satisfies 
 $\sup_{\bm{\xi}\in[0,1]^d}|\frac{\partial}{\partial\xi_i} \widehat{L}(\bm{\xi};\widehat{f})-\frac{\partial}{\partial\xi_i} \widehat{L}(\bm{\xi};f^*)|=O_P(\delta_n)$, where $\delta_n\searrow0$ as the sample size $n$ tends to infinity. 
Also, (b) the uniform convergence at $f=f^*$, i.e., 
 $\sup_{\bm{\xi}\in[0,1]^d}|\frac{\partial}{\partial{\xi}_i}\widehat{L}(\bm{\xi};f^*)-\frac{\partial}{\partial{\xi}_i}{L}(\bm{\xi};f^*)|=O_P(\delta_n')$ 
holds, where $\delta_n'\searrow0$ as $n\rightarrow\infty$. 
\end{assmp}
Under the above assumptions, we prove the variable selection consistency of $\widehat{\bm{\xi}}$. 
\begin{thm}
 \label{thm:var_consistency}
 (i) Under Assumptions~\ref{assump:uniform-consistency} and \ref{assump:lossfunc}, 
 $\widehat{\bm{\xi}}_1$ in $\widehat{\bm{\xi}}=(\widehat{\bm{\xi}}_1,\widehat{\bm{\xi}}_0)$ 
 has the statistical consistency, i.e., $\widehat{\bm{\xi}}_1$ converges to $\bm{\xi}_1^*=\1\in\Rbb^s$ in probability. 
 (ii) Assume the Assumptions~\ref{assump:uniform-consistency}, \ref{assump:lossfunc}, and \ref{assump:loss_deriv}. 
 Suppose that $\lim_{n\rightarrow\infty}(\delta_n+\delta_n')/\eta_n=0$, where $\delta_n$ and $\delta_n'$ are
 positive sequences in Assumption~\ref{assump:loss_deriv}. 
 Then, $\widehat{\bm{\xi}}_0=\bm{\xi}_0^*=\0\in\Rbb^{d-s}$ holds with high probability when the sample size is sufficiently large. 
\end{thm}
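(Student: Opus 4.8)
The plan is to regard the estimated garrote parameter $\widehat{\bm{\xi}}$ as the minimizer over the box $[0,1]^d$ of the penalized empirical criterion $Q_n(\bm{\xi}):=\widehat{L}(\bm{\xi};\widehat{f})+\eta_n\|\bm{\xi}\|_1$, and to establish the two assertions by different arguments: part~(i) by the standard consistency argument for $M$-estimators (argmin consistency), and part~(ii) by a first-order argument along a single coordinate that exploits the identity $\frac{\partial}{\partial\xi_\ell}L(\bm{\xi};f^*)\equiv0$ for $\ell>s$.

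For part~(i), under Assumptions~\ref{assump:uniform-consistency} and~\ref{assump:lossfunc} I would first combine the two uniform limits in Assumption~\ref{assump:lossfunc}(b) with the triangle inequality and the crude bound $\|\bm{\xi}\|_1\le d$ on the box (together with $\eta_n\to0$) to obtain $\sup_{\bm{\xi}\in[0,1]^d}\bigl|Q_n(\bm{\xi})-L(\bm{\xi};f^*)\bigr|=o_P(1)$. Since $f^*$ does not depend on $z_{s+1},\dots,z_d$, the population criterion $L(\bm{\xi};f^*)=L(f^*_{\bm{\xi}})$ is a function of $\bm{\xi}_1$ alone, and by the discussion preceding Theorem~\ref{thm:var_consistency} the point $\bm{\xi}^*=(\1,\0)$ is a minimizer of it over $[0,1]^d$. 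The usual sandwich $L(\widehat{\bm{\xi}};f^*)\le Q_n(\widehat{\bm{\xi}})+o_P(1)\le Q_n(\bm{\xi}^*)+o_P(1)\le L(\bm{\xi}^*;f^*)+o_P(1)$ then gives $L(\widehat{\bm{\xi}};f^*)-L(\bm{\xi}^*;f^*)=o_P(1)$. Finally I would convert this into $\|\widehat{\bm{\xi}}_1-\bm{\xi}_1^*\|=o_P(1)$ using the well-separation hypothesis Assumption~\ref{assump:lossfunc}(a): for each $\varepsilon>0$, compactness of $[0,1]^d$ and continuity of $L(\cdot;f^*)$ make $\rho(\varepsilon):=\inf\{L(\bm{\xi};f^*):\bm{\xi}\in[0,1]^d,\ \|\bm{\xi}_1-\bm{\xi}_1^*\|\ge\varepsilon\}-L(\bm{\xi}^*;f^*)$ strictly positive, so $\{\|\widehat{\bm{\xi}}_1-\bm{\xi}_1^*\|\ge\varepsilon\}\subseteq\{L(\widehat{\bm{\xi}};f^*)-L(\bm{\xi}^*;f^*)\ge\rho(\varepsilon)\}$, and the latter event has probability tending to zero.

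For part~(ii), fix $\ell\in\{s+1,\dots,d\}$. Because $f^*$ is independent of $z_\ell$, $\frac{\partial}{\partial\xi_\ell}L(\bm{\xi};f^*)=0$ for every $\bm{\xi}$, so Assumption~\ref{assump:loss_deriv}(a)--(b) and the triangle inequality yield $\sup_{\bm{\xi}\in[0,1]^d}\bigl|\frac{\partial}{\partial\xi_\ell}\widehat{L}(\bm{\xi};\widehat{f})\bigr|=O_P(\delta_n+\delta_n')=o_P(\eta_n)$, where the last step uses the hypothesis $(\delta_n+\delta_n')/\eta_n\to0$. Hence the event $A_{n,\ell}:=\bigl\{\sup_{\bm{\xi}\in[0,1]^d}\bigl|\frac{\partial}{\partial\xi_\ell}\widehat{L}(\bm{\xi};\widehat{f})\bigr|<\eta_n\bigr\}$ has probability tending to one; and on $A_{n,\ell}$, since $\|\bm{\xi}\|_1=\sum_j\xi_j$ on the box, $\frac{\partial}{\partial\xi_\ell}Q_n(\bm{\xi})=\frac{\partial}{\partial\xi_\ell}\widehat{L}(\bm{\xi};\widehat{f})+\eta_n>0$ for all $\bm{\xi}\in[0,1]^d$, so $Q_n$ is strictly increasing in $\xi_\ell$ for each fixed value of the other coordinates. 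This forces $\widehat{\xi}_\ell=0$ at the minimizer $\widehat{\bm{\xi}}$ on $A_{n,\ell}$, and a union bound over the finitely many indices $\ell>s$ gives $P(\widehat{\bm{\xi}}_0=\0)\to1$.

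The triangle-inequality bookkeeping and the argmin consistency step are routine; the part that needs care is the differentiability and constraint handling behind part~(ii). One must check that $\bm{\xi}\mapsto\widehat{L}(\bm{\xi};\widehat{f})$ is continuously differentiable on the whole box, so that the partial derivatives in Assumption~\ref{assump:loss_deriv} are meaningful and the monotonicity-in-$\xi_\ell$ conclusion is valid; this relies on smoothness of the power series kernel and the representer form $\widehat{f}_{\bm{\xi}}(\z)=\sum_i\alpha_i k(\z_i,\bm{\xi}\circ\z)$, and it is here that the contraction condition and the invariant property of Section~\ref{subsec:PowerSeriesKernels} keep $\widehat{f}_{\bm{\xi}}$ well defined for all $\bm{\xi}\in[0,1]^d$. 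I would also flag that part~(i) tacitly uses $\eta_n\to0$; combined with part~(ii) the two statements give the full variable selection consistency $\{j:\widehat{\xi}_j>0\}=\{1,\dots,s\}$, because $\widehat{\bm{\xi}}_1\to\1$ forces $\widehat{\xi}_j>0$ for $j\le s$ with probability tending to one.
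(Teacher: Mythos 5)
Your proposal is correct and follows essentially the same route as the paper: part~(i) is the standard argmin-consistency argument for $M$-estimators (uniform convergence of the penalized empirical criterion to $L(\cdot;f^*)$ plus the well-separation in Assumption~\ref{assump:lossfunc}(a)), and part~(ii) rests on the same key observation that $\sup_{\bm{\xi}}|\frac{\partial}{\partial\xi_\ell}\widehat{L}(\bm{\xi};\widehat{f})|=O_P(\delta_n+\delta_n')$ is dominated by $\eta_n$ for $\ell>s$. The only cosmetic difference is that you conclude $\widehat{\xi}_\ell=0$ via strict monotonicity of $Q_n$ in $\xi_\ell$, whereas the paper invokes the first-order optimality (variational inequality) condition at $\widehat{\bm{\xi}}$ and derives a contradiction — the same first-order information either way.
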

The proof is in 
Appendix~\ref{appendix:variable-consistency} 
follows the standard argument of the statistical consistency of
M-estimators shown in~\cite[Theorem 5.7]{vaart00:asympstatis}. 
The order of $\eta_n$ should be greater than $\delta_n$ and $\delta_n'$ in order to draw $\widehat{\bm{\xi}}_0$ to $\0$. 
All kernel-based estimators in Section~\ref{Problem_Setup} satisfies Assumptions~\ref{assump:uniform-consistency}.
In 
Appendix~\ref{appendix:cond-a-proof}, 
we prove that for each estimator in Section~\ref{Problem_Setup}, the condition (a) in
Assumptions~\ref{assump:lossfunc} holds under a mild assumption. 

We show sufficient conditions of Assumptions~\ref{assump:lossfunc} (b) and \ref{assump:loss_deriv}, 
when the quadratic loss functions in \eqref{eqn:quadratic-loss} and \eqref{eqn:expected-quadratic-loss} are used. 
Below, $\|C\|$ denotes the operator norm defined from the norm on $\mathcal{H}$. 
\begin{lem}
 \label{prop:conv-operator_Assump2}
 Let $k$ be the PS kernel. 
 We assume that $\|C\|<\infty$ and that $\|\widehat{C}-C\|$ and $\|\widehat{g}-g\|$ converge to zero in
 probability as sample size tends to infinity. 
 Then, under Assumption~\ref{assump:uniform-consistency}, the uniform convergence condition (b)
 in Assumption~\ref{assump:lossfunc} holds.
\end{lem}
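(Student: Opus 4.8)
The plan is to expand $\widehat{L}(\bm{\xi};f)=\widehat{L}(f_{\bm{\xi}})=\tfrac12\langle f_{\bm{\xi}},\widehat{C}f_{\bm{\xi}}\rangle-\langle\widehat{g},f_{\bm{\xi}}\rangle$ and to bound each of the two suprema in Assumption~\ref{assump:lossfunc}(b) by a random quantity that is \emph{independent of} $\bm{\xi}$, so that the supremum over $\bm{\xi}\in[0,1]^d$ costs nothing and no covering or bracketing argument is needed. The structural fact that makes this possible is the contraction property of PS-kernels from Section~\ref{subsec:PowerSeriesKernels}: the rescaling $f\mapsto f_{\bm{\xi}}$ is \emph{linear} in $f$ (the $\x^{\balpha}$-coefficient of $f_{\bm{\xi}}$ is $c_{\balpha}\bm{\xi}^{\balpha}$) and satisfies $\|f_{\bm{\xi}}\|\le\|f\|$ for every $\bm{\xi}\in[0,1]^d$; hence $\widehat{f}_{\bm{\xi}}-f^*_{\bm{\xi}}=(\widehat{f}-f^*)_{\bm{\xi}}$ and $\|\widehat{f}_{\bm{\xi}}-f^*_{\bm{\xi}}\|\le\|\widehat{f}-f^*\|$, $\|f^*_{\bm{\xi}}\|\le\|f^*\|$, $\|\widehat{f}_{\bm{\xi}}\|\le\|\widehat{f}\|$, all uniformly in $\bm{\xi}$.

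For the first supremum I would write, using that $\widehat{C}$ may be taken self-adjoint,
\[
\widehat{L}(\bm{\xi};\widehat{f})-\widehat{L}(\bm{\xi};f^*)
=\tfrac12\big\langle\widehat{f}_{\bm{\xi}}-f^*_{\bm{\xi}},\,\widehat{C}(\widehat{f}_{\bm{\xi}}+f^*_{\bm{\xi}})\big\rangle
-\big\langle\widehat{g},\,\widehat{f}_{\bm{\xi}}-f^*_{\bm{\xi}}\big\rangle,
\]
and apply Cauchy--Schwarz together with $\|\widehat{C}h\|\le\|\widehat{C}\|\,\|h\|$. The norm factors are controlled by: $\|\widehat{f}_{\bm{\xi}}-f^*_{\bm{\xi}}\|\le\|\widehat{f}-f^*\|=o_P(1)$ from Assumption~\ref{assump:uniform-consistency}; $\|\widehat{f}_{\bm{\xi}}+f^*_{\bm{\xi}}\|\le\|\widehat{f}\|+\|f^*\|$ with $\|\widehat{f}\|\le\|\widehat{f}-f^*\|+\|f^*\|=O_P(1)$; and $\|\widehat{C}\|\le\|\widehat{C}-C\|+\|C\|=O_P(1)$, $\|\widehat{g}\|\le\|\widehat{g}-g\|+\|g\|=O_P(1)$ by the hypotheses of the lemma. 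Multiplying, the expression is $O_P(1)\cdot o_P(1)=o_P(1)$ with a bound not depending on $\bm{\xi}$, which gives the first half of Assumption~\ref{assump:lossfunc}(b).

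For the second supremum the analogous expansion at $f=f^*$ is
\[
\widehat{L}(\bm{\xi};f^*)-L(\bm{\xi};f^*)
=\tfrac12\big\langle f^*_{\bm{\xi}},(\widehat{C}-C)f^*_{\bm{\xi}}\big\rangle-\big\langle\widehat{g}-g,\,f^*_{\bm{\xi}}\big\rangle,
\]
and Cauchy--Schwarz with $\|f^*_{\bm{\xi}}\|\le\|f^*\|$ bounds it by $\tfrac12\|f^*\|^2\|\widehat{C}-C\|+\|f^*\|\,\|\widehat{g}-g\|=o_P(1)$, again uniformly in $\bm{\xi}$. Taking the two suprema together yields Assumption~\ref{assump:lossfunc}(b). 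I do not expect a genuine obstacle here; the only points that deserve a line of care are the stochastic boundedness of $\|\widehat{f}\|$, $\|\widehat{C}\|$, $\|\widehat{g}\|$ (immediate from the triangle inequality and the assumed convergences) and the observation that linearity of the rescaling lets the factor $\|\widehat{f}-f^*\|$ be extracted before the contraction inequality $\|g_{\bm{\xi}}\|\le\|g\|$ is invoked with $g=\widehat{f}-f^*$ — this is exactly where the PS-kernel structure is essential and where a generic kernel would instead force a $\bm{\xi}$-uniformity argument.
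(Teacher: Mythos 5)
Your proposal is correct and follows essentially the same route as the paper's proof: both exploit the PS-kernel contraction $\|f_{\bm{\xi}}\|\leq\|f\|$ and linearity of $f\mapsto f_{\bm{\xi}}$ to obtain Cauchy--Schwarz bounds independent of $\bm{\xi}$, with the second supremum handled by the identical display. The only cosmetic difference is that you merge the two quadratic cross-terms into one via self-adjointness of $\widehat{C}$, whereas the paper keeps them separate (and thus does not need that assumption), but this does not change the argument.
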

\begin{lem}
 \label{prop:conv-operator_Assump3}
 Let $k$ be the differentiable PS kernel. 
 Suppose that $\z_1,\ldots,\z_n$ are i.i.d. samples from $p(\z)$ and that 
 $\z_1',\ldots,\z_{n'}'$ are i.i.d. samples from $q(\z')$.
 Let $\mathcal{I}$ and $\mathcal{J}$ are collections of finite subsets in $\{1,\ldots,d\}$. 
 Let us define
 $\widehat{C}=\frac{1}{n}\sum_{\ell=1}^{n}\sum_{I\in\mathcal{I}}h_I(\z_\ell)\partial_{I}k(\z_\ell,\cdot)\otimes\partial_{I}k(\z_\ell,\cdot)$
 and
 $\widehat{g}=\frac{1}{n'}\sum_{\ell=1}^{n'}\sum_{J\in\mathcal{J}}\bar{h}_J(\z_\ell')\partial_{J}k(\z_\ell',\cdot)$,
 where $h_I, I\in\mathcal{I}$ and $\bar{h}_J, J\in\mathcal{J}$ are bounded functions on $\mathcal{Z}$. 
 The operator $C$ and the element $g\in\mathcal{H}$ are defined by the expectation of $\widehat{C}$ and $\widehat{g}$, respectively. 
 Then, Assumption~\ref{assump:loss_deriv} holds under Assumption~\ref{assump:uniform-consistency}. 
\end{lem}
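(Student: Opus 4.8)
The plan is to reduce both parts of Assumption~\ref{assump:loss_deriv} to explicit estimates on the scalar maps $\bm{\xi}\mapsto\widehat{L}(\bm{\xi};f)$, using the generalized reproducing property and the compactness of $\mathcal{Z}$. First I would record an explicit formula: by $\partial_I f(\z)=\langle f,\partial_I k(\z,\cdot)\rangle$ and the definition of $\widehat{C},\widehat{g}$,
\[
 \widehat{L}(\bm{\xi};f)=\frac{1}{2n}\sum_{\ell=1}^{n}\sum_{I\in\mathcal{I}}h_I(\z_\ell)\bigl(\partial_I f_{\bm{\xi}}(\z_\ell)\bigr)^2-\frac{1}{n'}\sum_{\ell=1}^{n'}\sum_{J\in\mathcal{J}}\bar{h}_J(\z_\ell')\,\partial_J f_{\bm{\xi}}(\z_\ell'),
\]
and since $f_{\bm{\xi}}(\z)=f(\bm{\xi}\circ\z)$ one has $\partial_I f_{\bm{\xi}}(\z)=\bigl(\prod_{a\in I}\xi_a\bigr)(\partial_I f)(\bm{\xi}\circ\z)$ (counting multiplicities). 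Because $\mathcal{Z}$ is compact in $(-1,1)^d$, fix $\rho<1$ and $\epsilon>0$ with $\mathcal{Z}\subset[-\rho,\rho]^d$ and $(1+\epsilon)\rho<1$; then $\bm{\xi}\circ\z\in(-1,1)^d$ for all $\bm{\xi}\in(-1-\epsilon,1+\epsilon)^d$ and $\z\in\mathcal{Z}$, a region where every $f\in\mathcal{H}_k$ (a convergent power series) is real-analytic. Hence $\bm{\xi}\mapsto\widehat{L}(\bm{\xi};f)$ is $C^\infty$ near $[0,1]^d$, and by the product rule $\partial_{\xi_i}\widehat{L}(\bm{\xi};f)$ is an average over $\ell$ of a finite linear combination, with absolute-constant coefficients, of terms $h_I(\z_\ell)\cdot(\text{bounded powers of }\xi_a,z_{\ell,a})\cdot(\partial_{I'}f)(\bm{\xi}\circ\z_\ell)(\partial_{I''}f)(\bm{\xi}\circ\z_\ell)$ and $\bar h_J(\z_\ell')\cdot(\text{bounded powers})\cdot(\partial_{J'}f)(\bm{\xi}\circ\z_\ell')$, where $I',I'',J'$ range over a finite family of multisets obtained from $\mathcal{I},\mathcal{J}$ by adjoining at most one copy of $i$. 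On $[0,1]^d\times\mathcal{Z}$ the weights are bounded by hypothesis, the $\xi$- and $\z$-powers are bounded (by $1$ and by powers of $\rho$), and—crucially—the contraction condition gives $\bm{\xi}\circ\z\in\mathcal{Z}$, so by the generalized reproducing property, up to the order used, $|(\partial_J f)(\bm{\xi}\circ\z)|\le\|\partial_J f\|_{\infty}\le\kappa\|f\|$ for all the relevant $J$.

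For part (a), fix $i$ and subtract; writing $uv-u'v'=(u-u')v+u'(v-v')$ in the quadratic terms, $\partial_{\xi_i}\widehat{L}(\bm{\xi};\widehat{f})-\partial_{\xi_i}\widehat{L}(\bm{\xi};f^*)$ becomes a finite sum of averages over $\ell$ of products in which each factor is: a bounded weight; a bounded power of some $\xi_a$ or $z_{\ell,a}$; an evaluation factor such as $(\partial_J\widehat{f})(\bm{\xi}\circ\z_\ell)+(\partial_J f^*)(\bm{\xi}\circ\z_\ell)$, bounded by $\kappa(\|\widehat{f}\|+\|f^*\|)=O_P(1)$ uniformly in $\bm{\xi}$ and $\ell$ since $\|\widehat{f}\|\le\|f^*\|+\|\widehat{f}-f^*\|=O_P(1)$; and exactly one \emph{difference} factor $(\partial_J(\widehat{f}-f^*))(\bm{\xi}\circ\z_\ell)$, bounded by $\|\partial_J(\widehat{f}-f^*)\|_{\infty}\le\kappa\|\widehat{f}-f^*\|$. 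Hence $\sup_{\bm{\xi}\in[0,1]^d}\bigl|\partial_{\xi_i}\widehat{L}(\bm{\xi};\widehat{f})-\partial_{\xi_i}\widehat{L}(\bm{\xi};f^*)\bigr|\le C\,\|\widehat{f}-f^*\|\,O_P(1)$, uniformly over $i=1,\dots,d$. Since $\|\widehat{f}-f^*\|=o_P(1)$ by Assumption~\ref{assump:uniform-consistency}, choosing a deterministic $\delta_n\searrow0$ with $\|\widehat{f}-f^*\|=O_P(\delta_n)$—possible for any $o_P(1)$ sequence, and one may take $\delta_n$ to be the known convergence rate of $\widehat f$ when available—gives (a).

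For part (b), $f^*$ is non-random, and the formula above reads $\partial_{\xi_i}\widehat{L}(\bm{\xi};f^*)=\frac1n\sum_{\ell}\phi_i(\z_\ell;\bm{\xi})-\frac1{n'}\sum_{\ell}\psi_i(\z_\ell';\bm{\xi})$, where $\phi_i,\psi_i$ are functions of a single data point that are uniformly bounded on $\mathcal{Z}\times[0,1]^d$ and, by applying the same estimate once more to their $\bm{\xi}$-gradients, uniformly Lipschitz in $\bm{\xi}$; moreover $\partial_{\xi_i}L(\bm{\xi};f^*)=\Ebb_p[\phi_i(\z;\bm{\xi})]-\Ebb_q[\psi_i(\z';\bm{\xi})]$, differentiation under the integral being legitimate because the integrands and their $\bm{\xi}$-derivatives are uniformly bounded and $C=\Ebb[\widehat{C}]$, $g=\Ebb[\widehat{g}]$. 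A uniform law of large numbers over the compact, $d$-dimensional index set $[0,1]^d$—an $\varepsilon$-net of cardinality $O(\varepsilon^{-d})$, Hoeffding's inequality with a union bound on the net, the Lipschitz bound off the net, and optimising $\varepsilon$—yields $\sup_{\bm{\xi}\in[0,1]^d}\bigl|\frac1n\sum_{\ell}\phi_i(\z_\ell;\bm{\xi})-\Ebb_p[\phi_i(\z;\bm{\xi})]\bigr|=O_P\!\bigl(\sqrt{\log n/n}\bigr)$, and likewise for $\psi_i$ with $n'$. Thus (b) holds with $\delta_n'=\sqrt{\log(\min\{n,n'\})/\min\{n,n'\}}$, uniformly in $i$ (the logarithmic factor can be removed by a bracketing bound if a sharper rate is wanted). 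Together, Assumption~\ref{assump:loss_deriv} follows from Assumption~\ref{assump:uniform-consistency}.

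The step I expect to be the main obstacle is making the $\bm{\xi}$-differentiation rigorous: differentiating $\bm{\xi}\mapsto f_{\bm{\xi}}$ as an $\mathcal{H}$-valued map is problematic, since the ``diagonal'' derivative operator need not be bounded on $\mathcal{H}$, so one must instead differentiate the scalar evaluation functionals and rely on $\mathcal{Z}$ being a \emph{compact} subset of the open polydisc $(-1,1)^d$—so that $\bm{\xi}\circ\z$ stays strictly inside the region of analyticity of the power series $f$—together with the contraction condition, which keeps $\bm{\xi}\circ\z$ inside $\mathcal{Z}$ so that $\|\partial_J f\|_{\infty}\le\kappa\|f\|$ applies uniformly in $\bm{\xi}$. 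Once that expansion and its uniform bounds are in hand, part (a) is a Lipschitz-in-$f$ estimate and part (b) a routine uniform law of large numbers over a low-dimensional Lipschitz index set.
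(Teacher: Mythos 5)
Your proposal is correct and follows essentially the same route as the paper: the same explicit chain-rule expansion of $\partial_{\xi_i}\widehat{L}(\bm{\xi};f)$ via $\partial_I f_{\bm{\xi}}(\z)=\bigl(\prod_{a\in I}\xi_a\bigr)(\partial_I f)(\bm{\xi}\circ\z)$, the same product-difference decomposition with $\|\partial_J(\widehat f-f^*)\|_\infty\le\kappa\|\widehat f-f^*\|$ for part (a), and the same reduction of part (b) to a uniform law of large numbers over a Lipschitz-in-$\bm{\xi}$ class of single-data-point functions with rate $\sqrt{\log n/n}$. The only cosmetic difference is that you run the ULLN by hand ($\varepsilon$-net, Hoeffding, union bound) where the paper cites covering-number lemmas from van de Geer, and you spell out the $o_P(1)\Rightarrow O_P(\delta_n)$ conversion that the paper leaves implicit.
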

Loss functions for the estimators in Section~\ref{Problem_Setup} are expressed using
the above $\widehat{C}, C, \widehat{g}$ and $g$. 
The stochastic convergence property of $\|\widehat{C}-C\|$ and $\|\widehat{g}-g\|$
is guaranteed from Theorem~7 of \cite{rosasco10:_learn_integ_operat}. 
Hence, the sufficient condition in Proposition~\ref{prop:conv-operator_Assump2} is satisfied. 

Let us show another sufficient condition of Assumption~\ref{assump:loss_deriv}. 
We deal with more general operators $C$ and $\widehat{C}$, while we need an additional smoothness assumption on the target function $f^*$.
\begin{lem}
 \label{prop:conv-operator_Assump3_add}
 Let $k$ be the PS kernel of the RKHS $\mathcal{H}$. 
 For the linear operators $C$ and $\widehat{C}$, and the elements $g$ and $\widehat{g}$ in $\mathcal{H}$, 
 suppose that the inequalities, 
 $|\big\<f,\widehat{C}h\big\>|\leq \beta\|f\|_\infty\|h\|_\infty,\ 
 |\big\<f,C h\big\>|\leq \beta\|f\|_\infty\|h\|_\infty,\ 
 |\big\<f,\widehat{g}\>|\leq \beta\|f\|_\infty$, and 
 $|\big\<f,g\>|\leq \beta\|f\|_\infty$ 
 hold for any $f,h\in\mathcal{H}$, where $\beta$ is a positive constant. 
 We assume that for any training data,
 the derivatives of quadratic loss functions, 
  $\frac{\partial}{\partial\xi_i}\widehat{L}(\bxi;f)$ for $f\in\mathcal{H}_k$ and
 $\frac{\partial}{\partial\xi_i}L(\bxi;f^*)$, 
 are continuous as the function of $\bxi$ over the closed hypercube $[0,1]^d$.
 Then, (i) under Assumption~\ref{assump:uniform-consistency}, 
 the condition (a) in Assumption~\ref{assump:loss_deriv} holds.
 (ii) Suppose that $\|\widehat{C}-C\|$ and $\|\widehat{g}-g\|$ converge to zero in 
 probability as the sample size tends to infinity. 
 When $\sup_{\bxi\in(0,1)^d}\|\frac{\partial f_{\bxi}^*}{\partial\xi_i}\|<\infty$ holds for all
 $i=1,\ldots,d$,
 the condition (b) in Assumption~\ref{assump:loss_deriv} holds.
\end{lem}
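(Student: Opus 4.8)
The plan is to differentiate the quadratic form $\widehat{L}(\bxi;f)=\frac12\langle f_{\bxi},\widehat{C}f_{\bxi}\rangle-\langle\widehat{g},f_{\bxi}\rangle$ in $\xi_i$, write the relevant differences as sums of finitely many inner products each carrying an explicit ``difference factor'', and estimate every inner product through the hypothesized bounds $|\langle f,\widehat{C}h\rangle|\le\beta\|f\|_\infty\|h\|_\infty$, $|\langle f,\widehat{g}\rangle|\le\beta\|f\|_\infty$ and their population analogues. The one computation I would carry out first is the chain rule
\[
  \frac{\partial f_{\bxi}}{\partial\xi_i}(\z)=z_i\,(\partial_i f)(\bxi\circ\z),
\]
valid for $\bxi$ in the \emph{open} cube $(0,1)^d$, where the power series of $f_{\bxi}$ still defines an element of $\mathcal{H}_k$ after term-wise $\xi_i$-differentiation. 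Combined with $\mathcal{Z}\subset(-1,1)^d$ (so $|z_i|\le1$ on $\mathcal{Z}$), the contraction condition $\bxi\circ\mathcal{Z}\subseteq\mathcal{Z}$, the invariant property $\|g_{\bxi}\|\le\|g\|$, and the generalized reproducing bounds $\|h\|_\infty\le\kappa\|h\|$, $\|\partial_i h\|_\infty\le\kappa\|h\|$ already used in the text, this gives, uniformly over $\bxi\in(0,1)^d$, that $\|\partial f_{\bxi}/\partial\xi_i\|_\infty\le\|\partial_i f\|_\infty\le\kappa\|f\|$, $\|\partial(\widehat f-f^*)_{\bxi}/\partial\xi_i\|_\infty\le\kappa\|\widehat f-f^*\|$, $\|f_{\bxi}\|_\infty\le\kappa\|f\|$, and $\|\widehat f_{\bxi}-f^*_{\bxi}\|_\infty\le\kappa\|\widehat f-f^*\|$.

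For part (i), write $u=\widehat f_{\bxi}$, $v=f^*_{\bxi}$ and $\dot u,\dot v$ for their $\xi_i$-derivatives. By the product rule and bilinearity, $D_i(\bxi):=\frac{\partial}{\partial\xi_i}\widehat L(\bxi;\widehat f)-\frac{\partial}{\partial\xi_i}\widehat L(\bxi;f^*)$ expands, after adding and subtracting the mixed terms, into finitely many inner products of the forms $\langle\dot u-\dot v,\widehat Cu\rangle$, $\langle\dot v,\widehat C(u-v)\rangle$, $\langle u-v,\widehat C\dot u\rangle$, $\langle v,\widehat C(\dot u-\dot v)\rangle$, $\langle\widehat g,\dot u-\dot v\rangle$. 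Applying the hypothesized bounds yields
\[
  |D_i(\bxi)|\le\beta\Big(\tfrac12\|u\|_\infty+\tfrac12\|v\|_\infty+1\Big)\|\dot u-\dot v\|_\infty+\tfrac{\beta}{2}\big(\|\dot u\|_\infty+\|\dot v\|_\infty\big)\|u-v\|_\infty.
\]
Substituting the uniform bounds of the first paragraph, $\|u\|_\infty,\|\dot u\|_\infty\le\kappa\|\widehat f\|=O_P(1)$ by Assumption~\ref{assump:uniform-consistency}, $\|v\|_\infty,\|\dot v\|_\infty\le\kappa\|f^*\|$, and $\|u-v\|_\infty,\|\dot u-\dot v\|_\infty\le\kappa\|\widehat f-f^*\|$. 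Since $\|\widehat f-f^*\|=o_P(1)$, choose a deterministic $\delta_n\searrow0$ with $\|\widehat f-f^*\|=O_P(\delta_n)$ (always possible); then $\sup_{\bxi\in(0,1)^d}|D_i(\bxi)|=O_P(\delta_n)$. As $D_i$ is continuous on the closed cube by the continuity hypothesis on $\frac{\partial}{\partial\xi_i}\widehat L(\bxi;f)$ at $f=\widehat f$ and $f=f^*$, the supremum over $(0,1)^d$ coincides with the one over $[0,1]^d$, which is condition (a) of Assumption~\ref{assump:loss_deriv}.

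For part (ii), with $v=f^*_{\bxi}$ and $\dot v=\partial f^*_{\bxi}/\partial\xi_i$, the difference $\frac{\partial}{\partial\xi_i}\widehat L(\bxi;f^*)-\frac{\partial}{\partial\xi_i}L(\bxi;f^*)$ equals $\frac12\langle\dot v,(\widehat C-C)v\rangle+\frac12\langle v,(\widehat C-C)\dot v\rangle-\langle\widehat g-g,\dot v\rangle$, bounded by $\|\widehat C-C\|\,\|v\|\,\|\dot v\|+\|\widehat g-g\|\,\|\dot v\|$. Here $\|v\|=\|f^*_{\bxi}\|\le\|f^*\|$ by the invariant property and $\|\dot v\|\le\sup_{\bxi\in(0,1)^d}\|\partial f^*_{\bxi}/\partial\xi_i\|=:B_i<\infty$ by hypothesis, so this quantity is at most $B_i\big(\|f^*\|\,\|\widehat C-C\|+\|\widehat g-g\|\big)$ uniformly in $\bxi\in(0,1)^d$, hence $o_P(1)$ under the assumed convergence of $\|\widehat C-C\|$ and $\|\widehat g-g\|$. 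Taking a deterministic $\delta_n'\searrow0$ dominating this bound and extending to $[0,1]^d$ by continuity as above yields condition (b).

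The main obstacle, and the reason for the two auxiliary hypotheses in the statement, is the boundary of the cube. The term-wise $\xi_i$-derivative of the power series of $f_{\bxi}$ introduces an extra polynomial factor $\alpha_i$ on the $\balpha$-th coefficient, which is damped by $\xi_i^{\alpha_i-1}$ only while $\xi_i<1$; at $\xi_i=1$ the function $\partial f_{\bxi}/\partial\xi_i$ may fail to belong to $\mathcal{H}_k$, so the inner-product estimates cannot be invoked there directly. The continuity assumption on the loss derivatives over $[0,1]^d$ is precisely what lets us run the whole argument on the open cube, where every function in sight is a genuine RKHS element, and then pass to the closure by density. The same blow-up is what forces the hypothesis $\sup_{\bxi\in(0,1)^d}\|\partial f^*_{\bxi}/\partial\xi_i\|<\infty$ in part (ii): the $\mathcal{H}_k$-norm of $\partial f^*_{\bxi}/\partial\xi_i$ (unlike its sup-norm) can diverge as $\xi_i\to1$ for a generic $f^*\in\mathcal{H}_k$, and ruling this out is exactly the ``additional smoothness on the target function'' alluded to before the lemma.
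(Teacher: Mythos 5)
Your proposal is correct and follows essentially the same route as the paper's proof: the same derivative formula $\frac{\partial}{\partial\xi_i}\widehat{L}(\bxi;f)=\frac12\big\<f_{\bxi},\widehat{C}\frac{\partial f_{\bxi}}{\partial\xi_i}\big\>+\frac12\big\<\frac{\partial f_{\bxi}}{\partial\xi_i},\widehat{C}f_{\bxi}\big\>-\big\<\widehat{g},\frac{\partial f_{\bxi}}{\partial\xi_i}\big\>$, the same add-and-subtract decomposition estimated via the hypothesized $\beta$-bounds together with $\|f_{\bxi}\|_\infty\leq\kappa\|f\|$ and $\big\|\frac{\partial f_{\bxi}}{\partial\xi_i}\big\|_\infty\leq\kappa\|f\|$, the same passage from the open cube to $[0,1]^d$ by the continuity hypothesis, and the same operator-norm bound $\sup_{\bxi}\big\|\frac{\partial f^*_{\bxi}}{\partial\xi_i}\big\|\big(\|f^*\|\|\widehat{C}-C\|+\|\widehat{g}-g\|\big)$ in part (ii). Your explicit extraction of deterministic rates $\delta_n,\delta_n'$ from the $o_P(1)$ statements is a minor (and valid) refinement of what the paper leaves implicit.
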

The operators $C, \widehat{C}$ and the elements $g$ and $\widehat{g}$ in Proposition~\ref{prop:conv-operator_Assump3} satisfy
the inequalities of the assumption in Lemma~\ref{prop:conv-operator_Assump3_add}. 


\section{Experimental Results}
\label{experiment}
In this section we show the empirical performance of the kernel-based density-ratio estimator. 
In section~\ref{exp:syn}, we conduct the synthetic data analysis to mainly evaluate the variable selection consistency. 
In section~\ref{exp:real}, we analyze three real datasets, PCAS dataset~\cite{nishikimi2017novel},  
diabetes dataset~\cite{strack2014impact} and Wisconsin breast cancer dataset. 
The latter two datasets are published at the UCI machine learning repository. 
We see that our method can select a practically interpretable subset of features.

\subsection{Synthetic Data Analysis}
\label{exp:syn}

\begin{table*}[t]
 \label{tbl:test_error}
   \centering
 \caption{Test loss, FPR, and FNR for each learning method.
 The dimension of data is $d$, and density ratio depends on $s=5$ variables among $d$ variables. 
 The sample size is set to $n=1000$ and $m=800$. 
 The parameter $c$ corresponds to the discrepancy between two probability densities, $p$ and $q$. 
 Top panel: the results for $d=20, s=5$. Bottom panel: the results for $d=100, s=5$. 
 The bold face in the table shows the learning method such that the sum of FPR and FNR is minimum. 
 }
 {\scriptsize
     \begin{tabular}{l|rrr|rrr|rrr}
     \multicolumn{1}{c}{}& \multicolumn{9}{c}{$d=20,\ s=5$} \\ 
     \cline{2-10}
	\multicolumn{1}{c}{}   & \multicolumn{3}{c}{$c=0.1$}
   &  \multicolumn{3}{c}{$c=0.3$}
   &  \multicolumn{3}{c}{$c=0.5$}     \\ \hline
           estimator:$\eta$ & test loss      &  FPR & FNR & test loss      &  FPR & FNR& test loss      &  FPR & FNR \\ \hline
   SLR     & $-0.415$& \bm{$0.442$}& \bm{$0.000$} &  $ 0.511$ &$0.449$& $0.00$ & $ 9.198$& $0.482$ &  $0.000$ \\   
exp:0.01   & $ 3.212$& $1.000$& $0.000$ &  $ 1.634$ &$1.000$& $0.000$ & $ 0.276$& $1.000$ &  $0.000$ \\
exp:0.1    & $-0.493$& $0.000$& $0.920$ &  $ 1.583$ &$0.020$& $0.000$ & $ 0.278$& $0.936$ &  $0.000$ \\
exp:0.5    & $-0.500$& $0.000$& $1.000$ &  $ 0.296$ &\bm{$0.000$}& \bm{$0.000$} & $ 0.273$& \bm{$0.000$} & \bm{$0.000$} \\
exp:1      & $-0.500$& $0.000$& $1.000$ &  $-0.326$ &$0.000$& $0.260$ & $ 0.264$& \bm{$0.000$} &  \bm{$0.000$} \\
gauss:0.01 & $-0.252$& $0.964$& $0.180$ &  $-0.390$ &$1.000$& $0.000$ & $-0.448$& $1.000$ &  $0.000$ \\
gauss:0.1  & $-0.391$& $1.000$& $0.000$ &  $-0.392$ &$1.000$& $0.000$ & $-0.450$& $1.000$ &  $0.000$ \\
gauss:0.5  & $-0.383$& $0.984$& $0.027$ &  $-0.420$ &$1.000$& $0.000$ & $-0.448$& $1.000$ &  $0.000$ \\
gauss:1    & $-0.393$& $0.347$& $0.733$ &  $-0.427$ &$1.000$& $0.000$ & $-0.451$& $1.000$ &  $0.000$ \\ \hline \hline
     \end{tabular} 
 \vspace*{2mm}
\begin{tabular}{l|rrr|rrr|rrr}
   \multicolumn{1}{c}{}& \multicolumn{9}{c}{$d=100,\ s=5$} \\ 
   \cline{2-10}
   \multicolumn{1}{c}{}   & \multicolumn{3}{c}{$c=0.1$}
       &  \multicolumn{3}{c}{$c=0.3$}
   &  \multicolumn{3}{c}{$c=0.5$}     \\ \hline
           estimator:$\eta$ & test loss      &  FPR & FNR & test loss      &  FPR & FNR& test loss      &  FPR & FNR \\ \hline
   SLR      & $-0.446$& $0.132$& $0.013$ & $ 0.390$& $0.177$& $0.000$&  $  3.095$& $0.198$& $0.000$ \\
exp:0.01    & $-0.453$& \bm{$0.076$}& \bm{$0.007$} & $-0.351$& $1.000$& $0.000$&  $ -0.424$& $1.000$& $0.000$ \\
exp:0.1     & $-0.500$& $0.000$& $0.933$ & $-0.403$& \bm{$0.000$}& \bm{$0.000$}&  $ -0.432$& $0.001$& $0.000$ \\
exp:0.5     & $-0.500$& $0.000$& $1.000$ & $-0.481$& $0.000$& $0.053$&  $ -0.437$& \bm{$0.000$}& \bm{$0.000$} \\
exp:1       & $-0.500$& $0.000$& $1.000$ & $-0.500$& $0.000$& $0.893$&  $ -0.452$& \bm{$0.000$}& \bm{$0.000$} \\
gauss:0.01  & $-0.483$& $1.000$& $0.000$ & $-0.493$& $1.000$& $0.000$&  $ -0.497$& $1.000$& $0.000$ \\
gauss:0.1   & $-0.485$& $1.000$& $0.000$ & $-0.493$& $1.000$& $0.000$&  $ -0.498$& $1.000$& $0.000$ \\
gauss:0.5   & $-0.491$& $0.224$& $0.880$ & $-0.495$& $0.885$& $0.380$&  $ -0.498$& $1.000$& $0.000$ \\
gauss:1     & $-0.500$& $0.000$& $1.000$ & $-0.497$& $0.314$& $1.000$&  $ -0.497$& $0.851$& $0.333$ \\ \hline
\end{tabular}
 }
\end{table*}

We report numerical experiments using synthetic data. 
The purpose of this simulation is to confirm the statistical accuracy of the KuLSIF with NNG 
in terms of the variable selection in density ratio. 

Suppose that $\x_1,\ldots,\x_n$ were generated from the $d$-dimensional normal distribution 
with mean $\bm{\mu}$ and the variance-covariance matrix $I_d$, i.e., $N_d(\bm{\mu},I_d)$.
This distribution corresponds to $p$ in the denominator of the density ratio. 
Likewise, suppose that $\y_1,\ldots,\y_m$ were generated from the probability $q$
that is defined as the $d$-dimensional standard normal distribution $N_d(\0,I_d)$.
Here, $n=1000$ and $m=800$. 
Since $w^*(\z)=q(\z)/p(\z)$ is proportional to $\exp(-\z^T\bm{\mu})$, 
$w^*(\z)$ depends only on the variables such that the $i$-th component $\mu_i$ does not vanish. 
The $d$-dimensional vector $\bm{\mu}$ was set to $(c\,\1,\,\0)$, where $\1=(1,\ldots,1)\in\Rbb^s$ and $c$ was set to $0.1, 0.3$ or $0.5$. 
When the training data was fed to the learning algorithm, each component of data was scaled to zero mean and
unit variance. This scaling was introduced to stabilize the calculation of the exponential function. 

Our methods were compared with the sparse logistic regression (SLR) estimator, that is given as the logistic 
regression with the $\ell_1$-regularization. 
Originally, the SLR estimator is the learning algorithm for sparse classification problems. 
Suppose that the samples from $p(\x)$ (resp. $q(\x)$) have the label $+1$ (resp. $-1$). 
Then, the ratio $\Pr(-1|\x)/\Pr(+1|\x)$ of the estimated logistic model
$\Pr(+1|\x)=1/(1+\exp(-(\beta_0+\bm{\beta}^T\x)))$ is approximately proportional to the density ratio $q/p$
if the statistical model is specified. The $\ell_1$-regularization $\|{\bm\beta}\|_1$ to the weight vector
induces the sparse solution as well as the lasso estimator.
The regularization parameter for the $\ell_1$ regularization in SLR was determined by the cross validation. 
In our method, the hyper-parameter $\lambda$ of KuLSIF in step~1 
was set to $1/(n\wedge{m})^{0.9}$, which guarantees the statistical consistency of KuLSIF under a proper
assumption~\cite{kanamori12:_statis}. 
In step~2 of KuLSIF with NNG, the regularization parameters $\eta$ varies from 0.01 to 1.
For the computation, we used the R language. The {\tt{glmnet}} package of the R language was used for the SRL.
For the optimization in step 2 of KuLSIF with NNG,
we used the limited-memory BFGS method with the box-constraint implemented in {\tt{optim}} function in R. 

We report the result of the numerical experiments. 
The test loss of the estimated density ratio, $\widehat{w}$, was measured by the shifted squared loss,
$L(\widehat{w}) = \frac{1}{2}\int{}p(\x)\widehat{w}(\x)^2d\x-\int{}q(\y)\widehat{w}(\y)d\y$.
The variable selection accuracy was evaluated by the false positive rate (FPR) and false negative rate (FNR). 
In each problem setup, the simulation was repeated 30 times. 
The Table~\ref{tbl:test_error} shows the test loss, the FPR, and FNR averaged over the repetitions. 

The result indicates that the KuLSIF with NNG using the exponential kernel performs better than the other
method in terms of the variable selection if the regularization parameter $\eta$ is properly chosen. 
When we use the Gaussian kernel with our method, we cannot expect the ability of the variable selection.
The variables selected by the SLR had relatively high FPR. In the experiment, the regularization parameter 
of SLR was chosen by the cross validation under the empirical 0/1 loss. In this case, 
a bigger subset of variables tends to be chosen. 
The test loss of the SLR is larger than the other methods, since the classification-oriented learning
algorithms are considered not to be necessarily suit to the density ratio estimation. 


\subsection{Real Application : Covariate-shift Adaptation for Binary Classification}
\label{exp:real}

In this section, we show the results of applying the proposed method to {\it learning under the covariate shift 
problem}~\cite{shimodaira2000improving, sugiyama2012density}. 
The covariate shift is a phenomenon in which the distribution of covariates changes between learning phase and 
test phase, and it has been proved that the covariate shift can ``adapt'' by weighting the sample with density-ratio.  
We consider the binary classification problem under the covariate-shift, and compare the test accuracy of the five 
scenaros:(i) no adaptation, (ii)  adapting with the density-ratio using all variables, (iii) adapting with the 
density-ratio using selected variables by kernel NNG (proposed method), (iv) adapting with the density-ratio using selected variables by Lasso and  
 (v) adapting with the density-ratio using selected variables by sequential forward selection (SFS). 
Here we employ the logistic regression to learn a classification model.   
In this study, we analyzed four medical datasets: post-cardiac arrest syndrome (pcas)~\cite{nishikimi2017novel}, 
chronic kidney disease (ckd), cervical cancer (ccancer) and  cortex nuclear (cnuclear)~\cite{Dua:2017}. The characteristics of each dataset are summarized in Table~\ref{tb:character}. 
In the column of the sample size ($\#$samples) and the number of features ($\#$fearures), 
those after preprocessing, such as missing value correction, are entered.
In each dataset, we stratified the data into two groups by a certain feature, and learning was performed on 
one group, and the classification accuracy was evaluated on the other. 
The stratification factor is summarized in the last column of Table~\ref{tb:character}. 
The learning phase consists of kernel 
density-ratio estimation (KuLSIF) step and weighted empirical risk minimization (weighted logistic regression) step. 
All experiments are implemented in Python 3.6.1. We use scikit-learn~\cite{scikit-learn} for Lasso 
and mlxtend~\cite{raschkas_2016_594432} for SFS. 
Similar to the previous section, we used the limited-memory BFGS method with the box-constraint in {\tt{minimize}} function 
in Scipy. 

\begin{table}[t]
        \begin{center}
     \tblcaption{\label{tb:character} Summary of the datasets}
      \begin{tabular}{c|c|c|c}
 	\hline
     & $\#$samples & $\#$features & stratification \\ \hline
    pcas  & 151 & 17 & mydriasis ($0/1$)  \\ \hline 
    ckd  & 251 & 17 & pc ($0/1$)  \\ \hline 
    ccancer  & 668  & 27 & age ($\gtrless30$)  \\ \hline
    cnuclear & 1047 & 71 & class ($0\sim7$) \\ \hline
  \end{tabular}
  \end{center}
\end{table}

\begin{figure}[H]
 \vspace{-3pt}
 \begin{center}
 \begin{tabular}{cc}
  \includegraphics[bb=0 0 700 520, clip, scale=0.5]{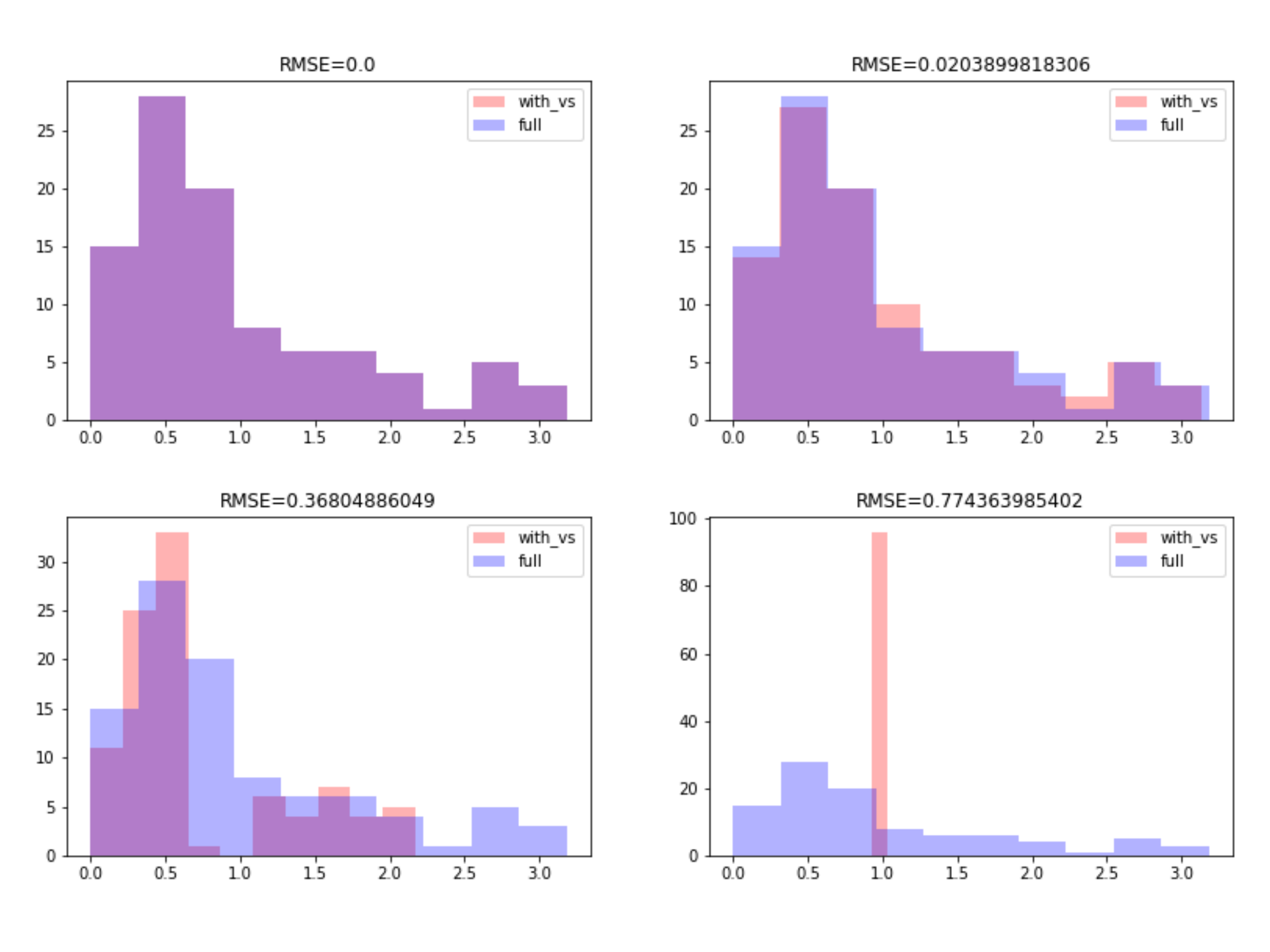}
 \end{tabular}
  \caption{The results of the proposed method with varying regularization parameter $\eta$. Top left : $\eta = 0.0001$, 
  top right : $\eta = 0.01$ (best), bottom left : $\eta = 0.1$, bottom right : $\eta = 1.0$. 
  The corresponding RMSE are $0.0$, $0.02$, $0.36$ and $0.77$ respectively. The x-axis represents the value of the density ratio, and the y-axis represents 
the frequency of the value of the density ratio in a certain interval.
  \label{density_ratio_parameters} }
 \end{center}
\end{figure}

The results of the experiments are summarized in Figure~\ref{density_ratio_parameters}, \ref{density_ratio_comparison} and Table~\ref{tb:real}. 
Figure~\ref{density_ratio_parameters} shows the differences of the density ratio estimated by selected variables (red, proposed method) and 
by full variables (blue) for the pcas dataset. 
The differences of each histogram are measured by root mean square deviation (RMSE). 
From top left to bottom right, we vary the regularization parameter $\eta$ from $0.0001$ to $1.0$. 
When we use small $\eta$, all variables are left and the corresponding value of density ratio are equal to that of using full variables (top left). 
On the other hand, large $\eta$ excludes of all variables and leads the constant density ratio (bottom right). 
Here we selected an regularization parameter $\eta$ appropriately by grid search in each dataset. 

\begin{figure}[t]
 \vspace{-3pt}
 \begin{center}
  \begin{tabular}{cc}
 \hspace{-0.8cm}
  \includegraphics[bb=0 0 700 710, clip, scale=0.35]{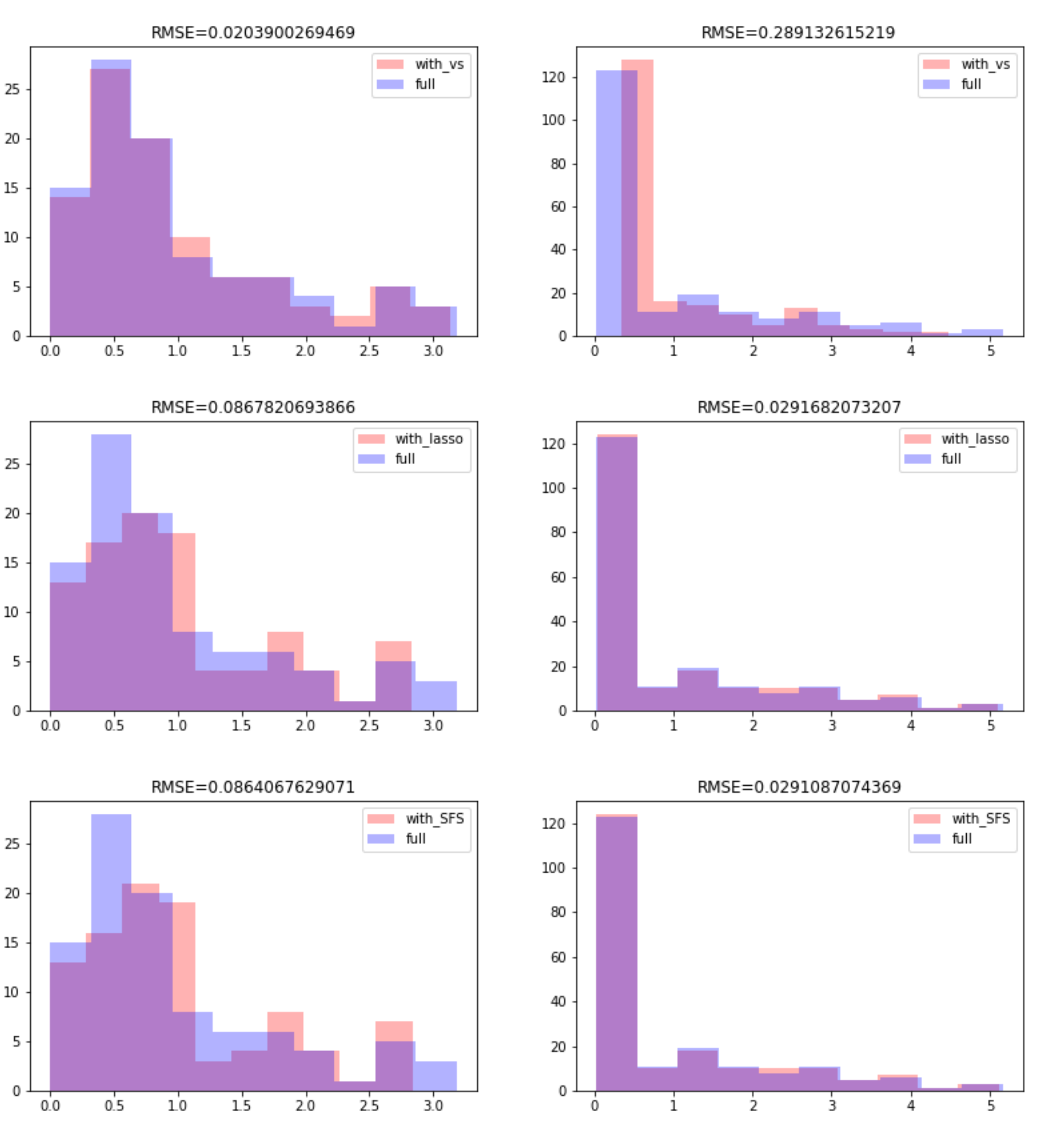}&
  \hspace{-0.8cm}
    \includegraphics[bb=0 0 700 710, clip, scale=0.35]{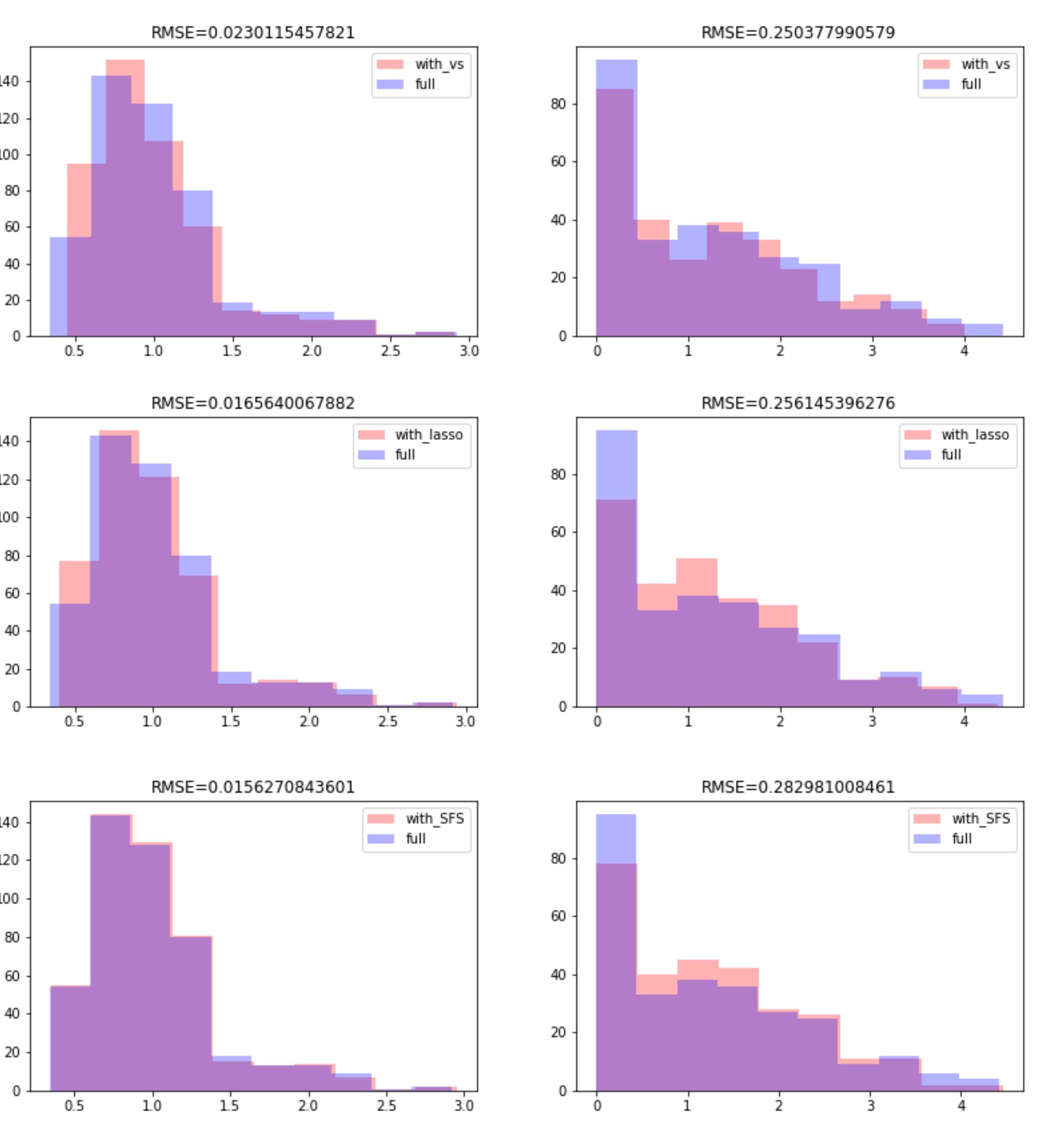}
 \end{tabular}
  \caption{Comparison of the estimated density-ratio between using all variables and using selected variables. 
  Top row : kernel NNG (proposed method), mid row : Lasso, bottom row : SFS. 
  From left to right, each column shows the result of pcas, ckd, ccancer and cortex nuclear dataset respectively. The x-axis represents the value of the density ratio, and the y-axis represents the frequency of the value of the density ratio in a certain interval.
  \label{density_ratio_comparison} }
 \end{center}
\end{figure}

\begin{table}[t]
        \begin{center}
         {
      \tblcaption{\label{tb:real}Comparison of classification accuracy}
      \begin{tabular}{c|c|c|c|c|c}
      \hline
&  kernel NNG  & Lasso & SFS & adapt full & no adapt  \\ \hline
    pcas & {\bf 0.830} (13/17)  & {\bf 0.830} (17/17) & {\bf 0.830} ({\bf 12/17}) & 0.830 & 0.773  \\ 
    ckd & {\bf 0.981} ({\bf 8/17}) & {\bf 0.981} (17/17) & {\bf 0.981} (16/17) & 0.981 & 0.943  \\ 
    ccancer & {\bf 0.884} ({\bf 4/27}) & 0.879 (19/27) & 0.879 (22/27) & 0.879 & 0.884 \\ 
   cnuclear & {\bf 0.634} (60/71) & 0.599 (62/71) & 0.603 ({\bf 37/71}) & 0.619 & 0.587 \\ \hline
      \end{tabular}
      }
  \end{center}
\end{table}


Table~\ref{tb:real} shows the test accuracy in each scenario: 
covariate shift adaptation and variable selection by proposed method (kernel NNG), 
covariate-shift adaptation and variable selection by Lasso (Lasso), 
covariate-shift adaptation and variable selection by SFS (SFS)
covariate-shift adaptation using all features (adapt full) and 
no covariate-shift adaptation (no adapt). 
The numbers in parentheses indicate the number of selected variables / the number of all variables in the density-ration estimation step. 
The bold symbol represents the best performance.
As shown in the results, the proposed method appropriately selected a small number of variables 
without deteriorating the classification accuracy.  

Figure~\ref{density_ratio_comparison} shows the comparison of the estimated density-ratio between 
selected variables (by kernel NNG, Lasso and SFS) and full variables.
The differences of each histogram are also measured by RMSE. 
Although the Lasso and the SFS achieved the small RMSE in ckd and ccancer datasets, 
both methods could not select the features appropriately. 
In pcas dataset, SFS shows the best performance in both the classification accuracy and the number of selected features.  
However, RMSE of the estimated density ratio is worse than the proposed method. 
On the other hand, we can observe that the kernel NNG performs the reasonable variable selection 
without significant change of the density ratio. 



\section{Concluding Remarks}
\label{conclusion}

This paper provided a unified variable selection method for
nonparametric learning by the power series kernel.
The proposed method can be applied to kernel methods using the
quadratic loss such as kernel-ridge regression and kernel
density-ratio estimation.
Theoretically, we proved the variable selection consistency under mild
assumption thanks to the property of the power series kernel.
Experimental results showed that our method efficiently worked for the
variable selection for kernel methods on both synthetic and real-world data. 
In the second step of our method, we need to solve non-convex
optimization problem for variable selection.
In numerical experiments, non-linear optimization algorithms such as
the limited-memory BFGS method showed a good performance.
Future work includes the development of more efficient optimization
methods to deal with learning problems with massive data sets. 

\section*{Acknowledgements}
This research was supported by JST-CREST (JPMJCR1412) from the Ministry of Education,
Culture, Sports, Science and Technology of Japan.


\bibliography{paper}
\bibliographystyle{plain}


\appendix


\section{Statistical consistency of kernel-based density-ratio estimator}
\label{Consistency_KuLSIF}

The proof follows the convergence analysis developed by \cite{Caponnetto:2007:ORR:1290530.1290534,JMLR:v18:16-011}. 
The estimator $\widehat{f}$ is the minimum solution of the function $\widehat{J}(f)$ over $\mathcal{H}$, where 
\begin{align*}
 \widehat{J}(f)= \widehat{L}(f)+\frac{\lambda_n}{2}\|f\|^2. 
\end{align*}
Thus, we have $\widehat{f}=(\widehat{C}+\lambda_n{I})^{-1}\widehat{g}$. 
When the sample size $n=\min\{n_1, n_2\}$ tends to infinity, $\widehat{J}(f)$ converges to $J(f)$ that is defined as 
\begin{align*}
 J(f)
 =
 L(f)+\frac{\lambda_n}{2}\|f\|^2
 =
 \frac{1}{2}\langle{}f,{C}f\rangle-\langle{}g,f\rangle+\frac{\lambda_n}{2}\|f\|^2. 
\end{align*}
The minimizer of $J(f)$ is expressed as $f_{\lambda_n}=(C+\lambda_n{I})^{-1}g$. 
Note that the true density ratio is given by $f^*=f_0=C^{-1}g=q/p$. Consider
\begin{align*}
 \widehat{f}-f_{\lambda_n}
 &=
 (\widehat{C}+\lambda_n{I})^{-1} \big( \widehat{g} - (\widehat{C}+\lambda_n{I}) f_{\lambda_n} \big)  \\
 &=
 (\widehat{C}+\lambda_n{I})^{-1} \big( \widehat{g} - g-(\widehat{C}-C)f_{\lambda_n} \big)  \\
 &=
 (\widehat{C}+\lambda_n{I})^{-1}(\widehat{g} - g)
 -(\widehat{C}+\lambda_n{I})^{-1}(\widehat{C}-C)(f_{\lambda_n}-f_0)
 -(\widehat{C}+\lambda_n{I})^{-1}(\widehat{C}-C)f_0. 
\end{align*}
We define 
\begin{align*}
 S_1&=\| \widehat{C}+\lambda_n{I})^{-1}(\widehat{g} - g)\|,\\
 S_2&=\|(\widehat{C}+\lambda_n{I})^{-1}(\widehat{C}-C)(f_{\lambda_n}-f_0)\|,\\
 S_3&=\|(\widehat{C}+\lambda_n{I})^{-1}(\widehat{C}-C)f_0\|,\\ 
 A(\lambda_n)&=\|f_{\lambda_n}-f_0\|, 
\end{align*}
so that
 \begin{align*}
  \|\widehat{f}_{\lambda_n}-f^* \|  = \|\widehat{f}_{\lambda_n} - f_0 \|\leq S_1+S_2+S_3+A(\lambda_n). 
 \end{align*}
Proposition A.4 in \cite{JMLR:v18:16-011} is used to bound $S_1,S_2$ and $S_3$ as follows, 
\begin{align*}
 S_1&\leq \|(\widehat{C}+\lambda_n{I})^{-1}\| \|\widehat{g} - g\| = O_p(\frac{1}{\lambda_n\sqrt{n}}), \\
 S_2&\leq \|(\widehat{C}+\lambda_n{I})^{-1}\| \|(\widehat{C}-C)(f_{\lambda_n}-f_0)\| = O_p(\frac{A(\lambda_n)}{\lambda_n\sqrt{n}}), \\
 S_3&\leq \|(\widehat{C}+\lambda_n{I})^{-1}\| \|(\widehat{C}-C)f_0\| = O_p(\frac{1}{\lambda_n\sqrt{n}}). 
\end{align*}
Using the above bounds, we obtain 
\begin{align*}
 \|\widehat{f}_{\lambda_n} - f_0\|
 =
 O_p\bigg(\frac{1}{\lambda_n\sqrt{n}}+\frac{A(\lambda_n)}{\lambda_n\sqrt{n}}\bigg)+A(\lambda_n). 
\end{align*}
The asymptotic order of $A(\lambda_n)$ was revealed by Proposition A.3(i) in \cite{JMLR:v18:16-011}. 
Indeed, (I) the proposition shows that $A(\lambda_n)\rightarrow0$ as $\lambda_n\rightarrow0$ 
if $f_0\in\overline{\mathrm{Range}(C)}$; (II) if $f_0\in\overline{\mathrm{Range}(C^\beta)}$ for $\beta>0$, it follows that 
\begin{align*}
 A(\lambda_n)\leq \max\{1,\,\|C\|^{\beta-1}\}\|C^{-\beta}f_0\|\lambda_n^{\min\{1,\beta\}}
\end{align*}


\section{Some Properties of Power Series Kernels}
\label{Properties_PSKernels}

For the RKHS $\mathcal{H}_k$ endowed with the PS-kernel $k$, 
let us consider the derivative $\frac{\partial f_{\bxi}}{\partial\xi_i} $ of $f\in\mathcal{H}_k$.
We prove 
\begin{align}
 \label{eqn:deriv-inclusion}
\frac{\partial f_{\bxi}}{\partial\xi_i}\in\mathcal{H}_k,\quad i=1,\ldots,d,\ \ \bxi\in(0,1)^d
\end{align}
for $f\in\mathcal{H}_k$. We apply Theorem 7.17 of \cite{rudin06:_princ_mathem_analy}, which claims the
following assertion. 

\noindent
 {\bf Theorem}:
 \emph{
 Suppose $\{F_n\}$ is a sequence of functions, differentiable on $[a,b]$ and such that $F_n(\xi_0)$ converges
 for some point $\xi_0$ on $[a,b]$. If the sequence of derivatives $\{F_n'\}$ converges uniformly on $[a,b]$,
 then $\{F_n\}$ converges 
 uniformly on $[a,b]$, to a function $F$, and $\displaystyle{}F'=\lim_{n\rightarrow\infty}F_n'$. 
}
 \begin{proof}
  [Proof of \eqref{eqn:deriv-inclusion}]
  Suppose that the coefficients of $f\in\mathcal{H}_k$ are $c_{\balpha},\,\balpha\in\Nbb_0^d$.
  From the definition of the power series kernel, we have $\sum_{\balpha}|c_{\balpha}|<\infty$. 
  Indeed, the inequality is derived from $\sum_{\balpha}(\balpha!)^2 c_{\balpha}^2/w_{\balpha}<\infty$ and 
  $\sum_{\balpha}w_{\balpha}/(\balpha!)^2<\infty$. 
  For a fixed $\x\in\mathcal{Z}$ and $(\xi_2,\ldots,\xi_d)\in[0,1]^{d-1}$,
  let us define $F_n(\xi_1;\x)$ and $G_n(\xi_1;\x)$ as 
\begin{align*}
 F_n(\xi_1;\x)
 &=\sum_{\balpha:|\balpha|\leq n} c_\balpha  \bxi^{\balpha}\x^{\balpha}
  =\sum_{\balpha:|\balpha|\leq n} c_\balpha \xi_1^{\alpha_1}\xi_2^{\alpha_2}\cdots \xi_d^{\alpha_d}\x^{\balpha},   \\ 
 G_n(\xi_1;\x)
 &= \frac{\partial}{\partial\xi_1}F_n(\xi_1;\x)
 = \sum_{\balpha:|\balpha|\leq n} c_\balpha \alpha_1  \xi_1^{\alpha_1-1}\xi_2^{\alpha_2}\cdots \xi_d^{\alpha_d}\x^{\balpha}, 
\end{align*}
where $|\balpha|$ of $\balpha\in\Nbb_0^d$ denotes the sum of all elements in $\balpha$. 
 We define $G(\xi_1;\x)=\lim_{n\rightarrow\infty}G_n(\xi_1;\x)$.
 Then, we have $G(\xi_1;\cdot )\in\mathcal{H}_k$ as the function of $\x$ if $\xi_1<1$. 
  To prove that,
  note that there exists a constant $B>0$ such that $0\leq\alpha_1\xi_1^{\alpha_1-1}<B$ for all $\alpha_1\in\Nbb_0$ if $\xi_1\in[0,1)$. 
  Then, the inequality $|c_\balpha \alpha_1\xi_1^{\alpha_1-1} \xi_2^{\alpha_2} \cdots \xi_d^{\alpha_d}|\leq B|c_{\balpha}|$ 
  guarantees
  the convergence of $G_n$ and $G(\xi_1;\cdot )\in\mathcal{H}_k$. 
  Next, we prove $\frac{\partial}{\partial\xi_1}f_{\bxi}=G(\xi_1;\cdot )$. 
  Choose $\varepsilon>0$ such that the closed interval $I=[\xi_1-\varepsilon,\xi_1+\varepsilon]$ is included in
  the open interval $(0,1)$. 
  There exists a constant $B>0$ such that for any $\xi\in{I}$ and any $\alpha\in\Nbb$
  the inequality $0\leq\alpha\xi^{\alpha-1}<{B}$ holds. 
  These facts yield
\begin{align*}
 \sup_{\xi_1\in{I}}|G(\xi_1;\x)- G_n(\xi_1;\x)|
 &\leq
 \sup_{\xi_1\in{I}}\sum_{\balpha:|\balpha|> n} |c_\balpha \alpha_1  \xi_1^{\alpha_1-1}\xi_2^{\alpha_2}\cdots \xi_d^{\alpha_d}\x^{\balpha}| \\
 &\leq
 \sum_{\balpha:|\balpha|> n} B|c_\balpha| \longrightarrow0\ \ (n\rightarrow\infty). 
\end{align*}
Note that $F_n(\xi_1;\x)\rightarrow f_{\bxi}(\x)$ holds uniformly on $\xi_1\in{I}$ as $n\rightarrow\infty$. 
The above theorem in \cite{rudin06:_princ_mathem_analy} 
guarantees $\frac{\partial}{\partial\xi_1}f_{\bxi}(\x)=G(\xi_1;\x)$
for arbitrary $\x\in\mathcal{Z}$, when $0\leq \xi_1<1$. 
Eventually, $\frac{\partial}{\partial\xi_1}f_{\bxi}=G(\xi_1;\cdot)\in\mathcal{H}_k$ holds 
at $\bxi=(\xi_1,\ldots,\xi_d)$ with $\xi_1<1$. In general, \eqref{eqn:deriv-inclusion} holds. 
\end{proof} 

In terms of the derivative $\frac{\partial{f}_{\bxi}}{\partial\xi_i}$, we prove some formulae. 
 \begin{lem}
  \label{appendix:lemma:deriv-unif-conv}
 Let us consider the RKHS $\mathcal{H}_k$ endowed with the PS kernel $k$. 
 Suppose $\|\widehat{f}-f^*\|=o_P(1)$ for the kernel-based estimator $\widehat{f}$ of $f^*\in\mathcal{H}_k$.
 Then, we have 
 $\big\|\frac{\partial \widehat{f}_{\bxi}}{\partial\xi_i}-\frac{\partial f_{\bxi}^*}{\partial\xi_i}\big\|=o_P(1)$
 for $\bxi=(\xi_1,\ldots,\xi_d)\in(0,1)^d$.
 \end{lem}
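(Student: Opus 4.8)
The plan is to reduce the claimed convergence $\bigl\|\frac{\partial \widehat{f}_{\bxi}}{\partial\xi_i}-\frac{\partial f_{\bxi}^*}{\partial\xi_i}\bigr\|=o_P(1)$ to the hypothesis $\|\widehat f - f^*\| = o_P(1)$ by exhibiting an \emph{explicit linear operator} $T_{\bxi,i}:\mathcal H_k\to\mathcal H_k$ such that $\frac{\partial f_{\bxi}}{\partial\xi_i}=T_{\bxi,i}f$ for all $f\in\mathcal H_k$, and then bounding $\|T_{\bxi,i}\|$ uniformly (or at least finitely at a fixed $\bxi$). Working in the power-series coordinates, if $f(\x)=\sum_\balpha c_\balpha \x^\balpha$ then $f_{\bxi}(\x)=\sum_\balpha c_\balpha\bxi^\balpha \x^\balpha$, and by the computation already carried out in the proof of \eqref{eqn:deriv-inclusion} we have $\frac{\partial f_{\bxi}}{\partial\xi_i}(\x)=\sum_\balpha c_\balpha\,\alpha_i\,\xi_i^{\alpha_i-1}\prod_{j\ne i}\xi_j^{\alpha_j}\,\x^\balpha$; that is, $T_{\bxi,i}$ acts on coefficients by $c_\balpha\mapsto c_\balpha\,\alpha_i\,\xi_i^{\alpha_i-1}\prod_{j\ne i}\xi_j^{\alpha_j}$. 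Linearity in $f$ is immediate from this formula, so $\frac{\partial \widehat f_{\bxi}}{\partial\xi_i}-\frac{\partial f^*_{\bxi}}{\partial\xi_i}=T_{\bxi,i}(\widehat f-f^*)$, and it suffices to show $\|T_{\bxi,i}\|<\infty$.

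To bound the operator norm, recall the RKHS norm for the PS-kernel is $\|g\|^2=\sum_\balpha (\balpha!)^2 d_\balpha^2/w_\balpha$ for $g(\x)=\sum_\balpha d_\balpha\x^\balpha$. Writing $m_\balpha(\bxi):=\alpha_i\,\xi_i^{\alpha_i-1}\prod_{j\ne i}\xi_j^{\alpha_j}$ for the multiplier, we get
\begin{align*}
\|T_{\bxi,i}f\|^2=\sum_\balpha (\balpha!)^2 \frac{c_\balpha^2\, m_\balpha(\bxi)^2}{w_\balpha}
\le \Bigl(\sup_\balpha m_\balpha(\bxi)^2\Bigr)\sum_\balpha (\balpha!)^2\frac{c_\balpha^2}{w_\balpha}
=\Bigl(\sup_\balpha m_\balpha(\bxi)^2\Bigr)\|f\|^2 .
\end{align*}
So the task is to show $\sup_\balpha m_\balpha(\bxi)<\infty$ for $\bxi\in(0,1)^d$. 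For each coordinate $j\ne i$ we have $\xi_j\in(0,1)$ so $\xi_j^{\alpha_j}\le 1$, and for the $i$-th coordinate $\alpha_i\,\xi_i^{\alpha_i-1}$ is bounded over $\alpha_i\in\Nbb_0$ whenever $\xi_i\in[0,1)$ — this is exactly the elementary fact (``there exists $B>0$ with $0\le\alpha\xi^{\alpha-1}<B$ for all $\alpha$ when $\xi\in[0,1)$'') already invoked in the proof of \eqref{eqn:deriv-inclusion}. Hence $\sup_\balpha m_\balpha(\bxi)\le B_i(\xi_i)<\infty$, giving $\|T_{\bxi,i}\|\le B_i(\xi_i)$.

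Combining the two steps: $\bigl\|\frac{\partial \widehat f_{\bxi}}{\partial\xi_i}-\frac{\partial f^*_{\bxi}}{\partial\xi_i}\bigr\| = \|T_{\bxi,i}(\widehat f - f^*)\| \le B_i(\xi_i)\,\|\widehat f - f^*\| = B_i(\xi_i)\cdot o_P(1) = o_P(1)$, which is the claim. I expect the only genuinely delicate point to be the justification that the coefficient-wise formula for $\frac{\partial f_{\bxi}}{\partial\xi_i}$ is legitimately the Fréchet/Gateaux derivative in $\mathcal H_k$ rather than merely a formal or pointwise one — but this has essentially been settled already in Appendix~\ref{Properties_PSKernels}, where \eqref{eqn:deriv-inclusion} is proved via term-by-term differentiation (Rudin's Theorem 7.17) together with the membership $G(\xi_i;\cdot)\in\mathcal H_k$; the same uniform-summability bound $\sum_{|\balpha|>n}B|c_\balpha|\to0$ used there upgrades the pointwise derivative to a norm-limit of partial sums, so the derivative in $\mathcal H_k$ coincides with $T_{\bxi,i}f$. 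Everything else is the routine multiplier-operator estimate above, and no new assumption beyond $\bxi\in(0,1)^d$ (needed so that $B_i(\xi_i)$ is finite, i.e.\ $\xi_i\ne1$) is required.
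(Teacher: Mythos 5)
Your proposal is correct and is essentially the paper's own argument: the paper likewise uses the coefficient-wise formula for $\frac{\partial f_{\bxi}}{\partial\xi_i}$ (justified by the term-by-term differentiation in the proof of \eqref{eqn:deriv-inclusion}) and the bound $0\le\alpha\xi^{\alpha-1}<B$ to get $\big\|\frac{\partial \widehat f_{\bxi}}{\partial\xi_i}-\frac{\partial f^*_{\bxi}}{\partial\xi_i}\big\|^2\le B^2\|\widehat f-f^*\|^2$. Packaging this as a bounded multiplier operator $T_{\bxi,i}$ is only a cosmetic difference.
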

\begin{proof}
 Suppose
 \begin{align*}
  f&=\sum_{\balpha\in\Nbb_0^d}c_{\balpha}\x^\balpha, \quad 
  \widehat{f}=\sum_{\balpha\in\Nbb_0^d}\widehat{c}_{\balpha}\x^\balpha. 
 \end{align*}
 As shown in the proof of \eqref{eqn:deriv-inclusion}, 
 the derivative of $f_{\bxi}$ is given 
 by the sum of the derivative of each term $c_{\balpha}\bxi^\balpha \x^\balpha$.
 Let $B$ be a positive number such that $0\leq\alpha_1 \xi_1^{\alpha_1-1}<B$ for all $\alpha_1\in\Nbb$. 
 The convergence of
 $\big\|\frac{\partial \widehat{f}_{\bxi}}{\partial\xi_1}-\frac{\partial f_{\bxi}^*}{\partial\xi_1}\big\|$ 
 is confirmed by 
  \begin{align*}
  \big\|\frac{\partial \widehat{f}_{\bxi}}{\partial\xi_1}-\frac{\partial f_{\bxi}^*}{\partial\xi_1}\big\|^2
  & =
  \sum_{\balpha}
  \frac{(\balpha!)^2}{w_{\balpha}}
  (\widehat{c}_{\balpha}-c_{\balpha})^2  (\alpha_1 \xi_1^{\alpha_1-1} \xi_2^{\alpha_2}\cdots\xi_d^{\alpha_d})^2 \\
  & \leq 
  B^2\sum_{\balpha}
  \frac{(\balpha!)^2}{w_{\balpha}}
  (\widehat{c}_{\balpha}-c_{\balpha})^2
  =B^2\|\widehat{f}-f\|^2 = o_P(1). 
  \end{align*}
 In general, we have
  $\big\|\frac{\partial \widehat{f}_{\bxi}}{\partial\xi_i}-\frac{\partial
 f_{\bxi}^*}{\partial\xi_i}\big\|=o_P(1)$. 
\end{proof}

  \begin{lem}
   \label{lemma:converge-diff-seq}
 Let us consider the RKHS $\mathcal{H}_k$ endowed with the PS kernel $k$. 
 We define $\e_i\in\Rbb^d$ as the unit vector with one in the $i$-th position and zeros otherwise.
 For $f\in\mathcal{H}_k$ and $\bxi\in(0,1)^d$, we have 
  \begin{align}
   \label{eqn:appendix-convf}
   &\lim_{t\rightarrow0}
   \big\|   f_{\bxi+t\e_i}-f_{\bxi} \big\| =0, \\
   \label{eqn:appendix-deriv-conv-RKHS}
   &\lim_{t\rightarrow0}
   \bigg\|
   \frac{f_{\bxi+t\e_i}-f_{\bxi}}{t}-\frac{\partial f_{\bxi}}{\partial\xi_i}
   \bigg\|
   =0
    \end{align}
  for $i=1,\ldots,d$. 
  \end{lem}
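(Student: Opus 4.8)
The plan is to pass to the monomial coordinates of the native space, where the RKHS norm becomes a weighted $\ell^2$ norm of coefficients, and then settle both limits by dominated convergence for series. Write $f=\sum_{\balpha\in\Nbb_0^d}c_\balpha\x^\balpha$; since $k$ is a PS kernel, $\|f\|^2=\sum_\balpha(\balpha!)^2c_\balpha^2/w_\balpha<\infty$, and, as in the proof of \eqref{eqn:deriv-inclusion}, also $\sum_\balpha|c_\balpha|<\infty$. The coefficients of $f_{\bxi}$ are $c_\balpha\bxi^\balpha$, and by \eqref{eqn:deriv-inclusion} the coefficients of $\partial f_{\bxi}/\partial\xi_i$ are $c_\balpha\,\partial_{\xi_i}(\bxi^\balpha)=c_\balpha\,\alpha_i\xi_i^{\alpha_i-1}\prod_{j\neq i}\xi_j^{\alpha_j}$. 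Fix $\bxi\in(0,1)^d$ and pick $\varepsilon>0$ with $[\xi_i-\varepsilon,\xi_i+\varepsilon]\subset(0,1)$; then $\bxi+t\e_i\in[0,1]^d$ for $|t|\le\varepsilon$, so $f_{\bxi+t\e_i}$ is well defined by the invariant property.

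For \eqref{eqn:appendix-convf} I would write
\[
\big\|f_{\bxi+t\e_i}-f_{\bxi}\big\|^2=\sum_{\balpha}\frac{(\balpha!)^2}{w_\balpha}\,c_\balpha^2\,\big((\bxi+t\e_i)^\balpha-\bxi^\balpha\big)^2 .
\]
Each summand tends to $0$ as $t\to0$ because $t\mapsto(\bxi+t\e_i)^\balpha$ is a polynomial; and since $\bxi,\bxi+t\e_i\in[0,1]^d$, one has the $t$-free bound $\big((\bxi+t\e_i)^\balpha-\bxi^\balpha\big)^2\le4$, so the summand is dominated by the summable sequence $4(\balpha!)^2c_\balpha^2/w_\balpha$. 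Dominated convergence for series then gives \eqref{eqn:appendix-convf}.

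For \eqref{eqn:appendix-deriv-conv-RKHS}, the same expansion yields
\[
\bigg\|\frac{f_{\bxi+t\e_i}-f_{\bxi}}{t}-\frac{\partial f_{\bxi}}{\partial\xi_i}\bigg\|^2=\sum_{\balpha}\frac{(\balpha!)^2}{w_\balpha}\,c_\balpha^2\,\Big(\frac{(\bxi+t\e_i)^\balpha-\bxi^\balpha}{t}-\partial_{\xi_i}(\bxi^\balpha)\Big)^2 ,
\]
and each summand tends to $0$ as $t\to0$ by the definition of the derivative of the monomial $\xi_i\mapsto\bxi^\balpha$. For the majorant I would invoke the mean value theorem: $\tfrac{(\bxi+t\e_i)^\balpha-\bxi^\balpha}{t}=\alpha_i\theta^{\alpha_i-1}\prod_{j\neq i}\xi_j^{\alpha_j}$ for some $\theta$ between $\xi_i$ and $\xi_i+t$, hence $\theta\in[0,\xi_i+\varepsilon]$ when $|t|\le\varepsilon$. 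Since $\xi_i+\varepsilon<1$, there is $B<\infty$ with $0\le\alpha u^{\alpha-1}\le B$ for all $\alpha\in\Nbb$ and $u\in[0,\xi_i+\varepsilon]$; together with $|\prod_{j\neq i}\xi_j^{\alpha_j}|\le1$ this bounds both the difference quotient and $\partial_{\xi_i}(\bxi^\balpha)$ by $B$, so the bracket squared is at most $4B^2$, uniformly in $t$. Hence the summand is dominated by $4B^2(\balpha!)^2c_\balpha^2/w_\balpha$, and dominated convergence gives \eqref{eqn:appendix-deriv-conv-RKHS}.

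The only genuinely delicate point is producing the uniform-in-$t$ summable majorant, and that is precisely where $\bxi\in(0,1)^d$ enters: strict positivity of $\xi_i$ keeps $\bxi+t\e_i$ inside the admissible box $[0,1]^d$ for small $t<0$ (so that $f_{\bxi+t\e_i}$ is meaningful and $\partial f_{\bxi}/\partial\xi_i\in\mathcal{H}_k$ by \eqref{eqn:deriv-inclusion}), while $\xi_i<1$ keeps $\sup_{\alpha\ge1}\alpha(\xi_i+\varepsilon)^{\alpha-1}$ finite. Everything else is the routine reduction of the RKHS norm to the weighted $\ell^2$ norm of coefficients afforded by the PS-kernel structure.
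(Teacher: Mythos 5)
Your proof is correct and follows essentially the same route as the paper: expand $f$ in the monomial basis, reduce the RKHS norm to a weighted $\ell^2$ sum of coefficients, and control the bracket uniformly via $\sup_{\alpha\ge1}\alpha u^{\alpha-1}<\infty$ for $u<1$. The only difference is the final limiting step — you use an $O(1)$ majorant plus termwise convergence and dominated convergence, whereas the paper sharpens the mean-value estimate to get each bracket bounded by $\varepsilon B_{\xi_1,\varepsilon}$ and concludes the whole squared norm is $O(\varepsilon^2)\|f\|^2$ directly; both are valid.
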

\begin{proof}
 We prove \eqref{eqn:appendix-deriv-conv-RKHS} for $i=1$.
 The equation \eqref{eqn:appendix-convf} is similarly proved. 
 Suppose that $f(\x)=\sum_{\balpha}c_{\balpha}\x^{\balpha}$
 with $\sum_{\balpha}\frac{(\balpha!)^2}{w_{\balpha}} c_{\balpha}^2<\infty$. 
 Note that both 
 $\frac{f_{\bxi+t\e_1}-f_{\bxi}}{t}$ and $\frac{\partial f_{\bxi}}{\partial\xi_1}$
 are included in $\mathcal{H}_k$. 
 Using the equality 
 \begin{align*}
  \frac{f_{\bxi+t\e_1}(\x)-f_{\bxi}(\x)}{t}-\frac{\partial f_{\bxi}}{\partial\xi_1}(\x)
  =
  \sum_{\balpha}c_{\balpha} 
  \left\{\frac{(\xi_1+t)^{\alpha_1}-\xi_1^{\alpha_1}}{t}-\alpha_1\xi_1^{\alpha_1-1}\right\}
  \xi_2^{\alpha_2}\cdots\xi_d^{\alpha_d}  \x^{\balpha}, 
 \end{align*}
 we have 
 \begin{align}
  \label{proof:appendix-deriv-conv-RKHS}
   \lim_{t\rightarrow0}
   \bigg\|
   \frac{f_{\bxi+t\e_1}-f_{\bxi}}{t}-\frac{\partial f_{\bxi}}{\partial\xi_1}
   \bigg\|^2
   \leq 
   \lim_{t\rightarrow0} \sum_{\balpha}\frac{(\balpha!)^2}{w_{\balpha}} c_{\balpha}^2
  \left\{\frac{(\xi_1+t)^{\alpha_1}-\xi_1^{\alpha_1}}{t}-\alpha_1\xi_1^{\alpha_1-1}\right\}^2. 
 \end{align}
Suppose that $0<\xi_1<1$ and  $0<\xi_1+t<1$ for $|t|\leq\varepsilon$, then  there exists a constant 
 $B_{\xi_1,\varepsilon}$ depending on $\xi_1$ and $\varepsilon$ such that
 for all $\alpha_1\in\Nbb$, 
\begin{align*}
 \left|\frac{(\xi_1+t)^{\alpha_1}-\xi_1^{\alpha_1}}{t}-\alpha_1\xi_1^{\alpha_1-1}\right|
& \leq
 \max_{|t|\leq\varepsilon}|\alpha_1(\xi_1+t)^{\alpha_1-1}-\alpha_1\xi_1^{\alpha_1-1}|  \\ 
& \leq
 \varepsilon\cdot \max_{|t|\leq\varepsilon}\alpha_1(\alpha_1-1)(\xi_1+t)^{\alpha_1-2} \\
& \leq
 \varepsilon\cdot \max_{|t|\leq\varepsilon}\alpha_1^2(\xi_1+t)^{\alpha_1-2} \\
 & \leq
 \varepsilon B_{\xi_1,\varepsilon}. 
\end{align*}
More precisely, $B_{\xi_1,\varepsilon}$ can be chosen as
\begin{align*}
 B_{\xi_1,\varepsilon}=4e^{-2}\max\big\{
 ((\xi_1+\varepsilon)\log(\xi_1+\varepsilon))^{-2},\
 ((\xi_1-\varepsilon)\log(\xi_1-\varepsilon))^{-2}
 \big\}. 
\end{align*}
Hence, for all $t$ such that $|t|\leq \varepsilon$ with a sufficiently small $\varepsilon$, we have 
\begin{align*}
  \sup_{t:|t|\leq \varepsilon}\sum_{\balpha}\frac{(\balpha!)^2}{w_{\balpha}} c_{\balpha}^2
  \left\{\frac{(\xi_1+t)^{\alpha_1}-\xi_1^{\alpha_1}}{t}-\alpha_1\xi_1^{\alpha_1-1}\right\}^2  
  \leq
  \varepsilon^2  B_{\xi_1,\varepsilon}^2\sum_{\balpha}\frac{(\balpha!)^2}{w_{\balpha}} c_{\balpha}^2
 \end{align*}
 Since $\displaystyle\lim_{\varepsilon\searrow0}\varepsilon^2  B_{\xi_1,\varepsilon}^2=0$ for a fixed $\xi_1\in(0,1)$, 
 the left-hand side of
 \eqref{proof:appendix-deriv-conv-RKHS}
 converges to zero. 
\end{proof}
 
 Let us calculate the derivative of the quadratic loss
 $\widehat{L}(\bxi;f)=\frac{1}{2}\<f_{\bxi},\widehat{C} f_{\bxi}\> - \<\widehat{g}, f_{\bxi}\>$.
 For $\bxi\in(0,1)^d$ and non-zero $t\in\Rbb$, we have 
 \begin{align}
  &\phantom{=}
  \frac{1}{t}\{\widehat{L}(\bxi+t\e_i;f)-\widehat{L}(\bxi;f)\} \nonumber\\
  \label{eqn:appendix:loss-limit-decomp}
  &=
  \frac{1}{2}\<f_{\bxi+t\e_i}, \widehat{C}\frac{f_{\bxi+t\e_i}-f_{\bxi}}{t}\>
  +
  \frac{1}{2}\<\frac{f_{\bxi+t\e_i}-f_{\bxi}}{t},
  \widehat{C}f_{\bxi}\>-\<\widehat{g},\frac{f_{\bxi+t\e_i}-f_{\bxi}}{t}\>  
 \end{align}
 We calculate the first term in the right-hand side of the above equation: 
\begin{align*}
 &\phantom{=}
 \bigg|
 \<f_{\bxi+t\e_i}, \widehat{C}\frac{f_{\bxi+t\e_i}-f_{\bxi}}{t}\>
 -\<f_{\bxi}, \widehat{C}\frac{\partial{f_{\bxi}}}{\partial\xi_i}\>
 \bigg|  \\
 &\leq
  \bigg|
  \<f_{\bxi+t\e_i},
 \widehat{C}\big(\frac{f_{\bxi+t\e_i}-f_{\bxi}}{t}-\frac{\partial{f_{\bxi}}}{\partial\xi_i}\big)\>
 \bigg|
 +
 \bigg|
 \<f_{\bxi+t\e_i}-f_{\bxi}, \widehat{C}\frac{\partial{f_{\bxi}}}{\partial\xi_i}\>
 \bigg|  \\
 &\leq 
 \|\widehat{C}\|\|f_{\bxi+t\e_i}\|
 \left\|   \frac{f_{\bxi+t\e_i}-f_{\bxi}}{t}-\frac{\partial{f}_{\bxi}}{\partial\xi_i}   \right\|
 +
 \|\widehat{C}\|\bigg\|\frac{\partial{f_{\bxi}}}{\partial\xi_i}\bigg\| \|f_{\bxi+t\e_i}-f_{\bxi}\|. 
\end{align*}
Lemma~\ref{lemma:converge-diff-seq} guarantees that the upper bound of the above equation converges to zero. 
Hence, the first term of \eqref{eqn:appendix:loss-limit-decomp} converges to
$\<f_{\bxi}, \widehat{C}\frac{\partial{f_{\bxi}}}{\partial\xi_i}\>$ as $t$ tends to zero.
Similar calculation yields that 
 \begin{align}
  \label{eqn:appendix:deriv-quad-empiricalloss}
 \frac{\partial}{\partial\xi_i}\widehat{L}(\bxi;f)
 =
 \frac{1}{2}\big\<f_{\bxi},\widehat{C}\frac{\partial{f_{\bxi}}}{\partial\xi_i}\big\>
 +
 \frac{1}{2}\big\<\frac{\partial{f_{\bxi}}}{\partial\xi_i},\widehat{C}f_{\bxi}\big\>
 -
 \big\<\widehat{g},\,\frac{\partial{f_{\bxi}}}{\partial\xi_i}\big\>
 \end{align}
 for $i=1,\ldots,d$, $f\in\mathcal{H}$ and $\bxi\in(0,1)^d$.
 
Similar formula holds for the derivatives of the expected quadratic loss, 
$\frac{\partial}{\partial\xi_i}L(\bxi;f)$. 

\section{Theoretical Results}
\label{appendix:proofs}

\subsection{Proof of Theorem~\ref{thm:var_consistency}}
\label{appendix:variable-consistency}
\begin{proof}[Proof of (i)]
    Define $M(\bm{\xi})$ and $M_n(\bm{\xi})$ for $\bm{\xi}\in[0,1]^d$ as 
  \begin{align*}
   M(\bm{\xi}):=L(\bm{\xi};f^*),\ \  \text{and}\ \ 
   M_n(\bm{\xi}):=\widehat{L}(\bm{\xi};\widehat{f})+\eta_n\1^T\bm{\xi}. 
  \end{align*}
 First, we prove $\sup_{\bm{\xi}\in[0,1]^d}|M(\bm{\xi})-M_n(\bm{\xi})|=o_P(1)$.
 The condition (b) in Assumption~\ref{assump:lossfunc}
 leads to 
  \begin{align*}
   \sup_{\bm{\xi}}|\widehat{L}(\bm{\xi};\widehat{f})-L(\bm{\xi};f^*)|
   \leq
   \sup_{\bm{\xi}}|\widehat{L}(\bm{\xi};\widehat{f})-\widehat{L}(\bm{\xi};f^*)|
   +
   \sup_{\bm{\xi}}|\widehat{L}(\bm{\xi};f^*)-L(\bm{\xi};f^*)|=o_p(1). 
  \end{align*}
   Then, we obtain
  \begin{align*}
   \sup_{\bm{\xi}\in[0,1]^d}|M(\bm{\xi})-M_n(\bm{\xi})|
   &=
   \sup_{\bm{\xi}\in[0,1]^d}|L(\bm{\xi};f^*)-\widehat{L}(\bm{\xi};\widehat{f})-\eta_n\1^T\bm{\xi}| \\
   &\leq
   \sup_{\bm{\xi}\in[0,1]^d}|L(\bm{\xi};f^*)-\widehat{L}(\bm{\xi};\widehat{f})|+d\eta_n=o_P(1). 
  \end{align*}
 We prove the consistency of $\widehat{\bm{\xi}}_1$ using Theorem 5.7 of \cite{vaart00:asympstatis}.  
   Since $M_n(\widehat{\bm{\xi}})\leq M_n(\bm{\xi}^*)$ and $M_n(\bm{\xi}^*)=M(\bm{\xi}^*)+o_P(1)$, 
   we have $M_n(\widehat{\bm{\xi}})\leq M(\bm{\xi}^*)+o_P(1)$. Since $f^*(\z)$ depends only on $z_1,\ldots,z_s$, we have 
   \begin{align*}
    M(\widehat{\bm{\xi}}_1,\0)-M({\bm{\xi}}_1^*,\0)
    &=
    M(\widehat{\bm{\xi}}_1,\widehat{\bm{\xi}}_0)-M({\bm{\xi}}_1^*,{\bm{\xi}}_0^*)  \\
   &\leq
   M(\widehat{\bm{\xi}}_1,\widehat{\bm{\xi}}_0)-M_n(\widehat{\bm{\xi}}_1,\widehat{\bm{\xi}}_0)+o_P(1)\\
   &\leq
   \sup_{\bm{\xi}}|M(\bm{\xi})-M_n(\bm{\xi})|+o_P(1)=o_P(1). 
   \end{align*}
 For any given $\varepsilon>0$, the condition (a) in Assumption~\ref{assump:lossfunc} 
 guarantees that there exists $\gamma>0$ such that $M(\bm{\xi}_1,\0)>M(\bm{\xi}_1^*,\0)+\gamma$ 
 for any $\bm{\xi}_1$ with $\|\bm{\xi}_1-\bm{\xi}_1^*\|\geq \varepsilon$.
 Thus, the event $\|\widehat{\bm{\xi}}_1-\bm{\xi}_1^*\|\geq \varepsilon$ is included in the event
 $M(\widehat{\bm{\xi}}_1,\0)>M(\bm{\xi}_1^*,\0)+\gamma$.
 Since the probability of the latter event converges to $0$, $\widehat{\bm{\xi}}_1$ converges to $\bm{\xi}_1^*$ in
 probability. 
\end{proof}

\begin{proof}[Proof of (ii)]
   We define
   and $\nabla$ as the gradient operator
   $\big(\frac{\partial}{\partial\xi_1},\ldots,\frac{\partial}{\partial\xi_d}\big)^T$. 
   For $i=s+1,\ldots,d$, Assumption~\ref{assump:loss_deriv} leads to 
   \begin{align*}
    &\phantom{=}\sup_{\bm{\xi}\in[0,1]^d}\big|\frac{\partial}{\partial\xi_i}\widehat{L}(\bm{\xi};\widehat{f})\big| \\
     &=
    \sup_{\bm{\xi}\in[0,1]^d}\big|
    \frac{\partial}{\partial\xi_i}\widehat{L}(\bm{\xi};\widehat{f})-\frac{\partial}{\partial\xi_i}L(\bm{\xi};f^*)
    \big|\\ 
     &\leq
    \sup_{\bm{\xi}\in[0,1]^d}
    \big|
    \frac{\partial}{\partial\xi_i}\widehat{L}(\bm{\xi};\widehat{f})-\frac{\partial}{\partial\xi_i}\widehat{L}(\bm{\xi};f^*)
    \big|
     +
    \sup_{\bm{\xi}\in[0,1]^d}
    \big|
    \frac{\partial}{\partial\xi_i}\widehat{L}(\bm{\xi};f^*)-\frac{\partial}{\partial\xi_i}L(\bm{\xi};f^*)
    \big|\\ 
     &=O_P(\delta_n+\delta_n'). 
   \end{align*}
   The first equality comes from the assumption that $f^*$ does not depend on $z_{s+1},\ldots,z_d$. 
   The estimator $\widehat{\bm{\xi}}=(\xi_1,\ldots,\xi_d)=(\widehat{\bm{\xi}}_1,\, \widehat{\bm{\xi}}_0)$
   should satisfy the local optimality condition, i.e., for any $\c\in[0,1]^d$, 
  \begin{align*}
   (\nabla{\widehat{L}}(\widehat{\bm{\xi}};\widehat{f}) +\eta_n\1)^T(\c-\widehat{\bm{\xi}})\geq 0
  \end{align*}
   holds. Hence, the inequality 
  \begin{align*}
   \big(\frac{\partial}{\partial\xi_i}{\widehat{L}}(\widehat{\bm{\xi}};\widehat{f}) +\eta_n\1\big)
   (c_i-\widehat{\xi}_i)\geq 0
  \end{align*}
  should hold for any $i=s+1,\ldots,d$ and any $c_i\in[0,1]$. 
  Here, we assume that an element of $\widehat{\bm{\xi}}_0=(\widehat{\xi}_{s+1},\ldots,\widehat{\xi}_{d})$, say 
   $\widehat{\xi}_d$, is strictly positive. Then, by setting $c_d=0$
  we have 
  \begin{align*}
   \frac{\partial}{\partial\xi_d}\widehat{L}(\widehat{\bm{\xi}};\widehat{f})+\eta_n\leq 0. 
  \end{align*}
   Since $\frac{\partial}{\partial\xi_d}\widehat{L}(\widehat{\bm{\xi}};\widehat{f})=O_P(\delta_n+\delta_n')$ 
   and the positive sequence $\eta_n$ dominates $\delta_n+\delta_n'$ by the assumption,  
   the above inequality leads to the contradiction as the sample size
  becomes large. 
  Therefore, $\widehat{\bm{\xi}}_0=\0$ holds with high probability for sufficiently large sample size. 
  \end{proof}
  
\subsection{Condition (a) in Assumption~\ref{Problem_Setup}}
\label{appendix:cond-a-proof}

\subsubsection{Kernel-ridge estimator}
Note that the function $f\in\mathcal{H}$ is differentiable if the kernel function is differentiable. 
We see that the minimizer of $L(f)$ is $f^*$ and that the expected loss function is expressed as 
\begin{align*}
 L(f)=\int(y-f^*(\x))^2 p(\x,y)d{\x}dy + \int(f^*(\x)-f(\x))^2 p(\x)d{\x}
\end{align*}
up to constant terms. 
Suppose that the random variable $\varepsilon_i$ is bounded and $p(\x)$ is positive on the input domain. 
Since the function $f\in\mathcal{H}$ is continuous, the second term of the above equation becomes
positive when the function $f\in\mathcal{H}$ is different from $f^*$.
Suppose that $f^*(x_1,\ldots,x_d)$ essentially depends only on $x_1,\ldots,x_s, s\leq d$. 
Then, we have
\begin{align*}
 L(f^*)=L((\bm{\xi}_1^*,\bm{\xi}_0^*);f^*), 
\end{align*}
where $\bm{\xi}_1^*=\1\in\Rbb^s,\,\bm{\xi}_0^*=\0\in\Rbb^{d-s}$.  
Suppose that $L(\bm{\xi};f^*)=L(f^*)$ holds for $\bm{\xi}=(\bm{\xi}_1,\0), {\bm\xi}_1=(\xi_1,\ldots,\xi_s)\in[0,1]^s$ such that
$\|{\bm\xi}_1-{\bm\xi}_1^*\|_2\geq \varepsilon$.
Since the optimal function is unique on $\mathcal{H}$, 
the equality $f^*=f_{\bm{\xi}}^*\in\mathcal{H}$ should hold. 
Without loss of generality, we assume $\xi_1<1, \xi_2=\ldots=\xi_s=1$. Then,
for any $(x_1,\ldots,x_s)$ in the domain, we have
\begin{align*}
 f^*(x_1,x_2,\ldots,x_s)=f^*(\xi_1x_1,x_2,\ldots,x_s)=f^*(\xi_1^2x_1,x_2,\ldots,x_s)\rightarrow
 f^*(0,x_2,\ldots,x_s). 
\end{align*}
Hence, $f^*$ does not depend on $x_1$. This contradicts the assumption of $f^*$.
Thus, we have $L((\bm{\xi}_1,\0);f^*)>L((\bm{\xi}_1^*,\0);f^*)$ for $\bm{\xi}_1\neq\bm{\xi}_1^*$. 
Moreover, if the function $L((\bm{\xi}_1,\0);f^*)$ is continuous w.r.t. $\bm{\xi}_1$,
the condition (a) in Assumption~\ref{Problem_Setup} holds, because the set 
$\{\bxi_1\in[0,1]^s\,:\,\|\bxi_1-\bxi_1^*\|_2\geq \varepsilon\}$ is a compact set. 
We can prove the continuity of $L(\bm{\xi};f^*)$ from
the boundedness of the random variable $\varepsilon_i$ and the Lebesgue's dominated convergence theorem. 


\subsubsection{Kernel-based density-ratio estimator}

The condition (a) in Assumption~\ref{Problem_Setup} is confirmed in the same way as the kernel-ridge
estimator. Hence, we omit the details. 


\subsubsection{Kernel-based density estimator}

For $f^*\in\mathcal{H}_k$, the probability density $p(\z)\propto\exp(f^*(\z))$ is strictly positive on the
compact domain. 
Suppose that essentially $f^*$ depends only on $x_1,\ldots,x_s, s\leq d$ and that 
$L(\bxi;f^*)=L(f^*)$ holds for $\bxi=(\bxi_1,0)\in\Rbb^d,],\bxi_1\in[0,1]^s$ such that $\|\bxi_1-\1\|_2\geq \varepsilon$. 
We have $\partial_a{f}_{\bxi}^*=\partial_{a}f^*,\,a=1,\ldots,d$ on the domain and hence, $f^*=f_{\bxi}^*+c,\, c\in\Rbb$ holds. 
Without loss of generality, we assume $\xi_1<1$ and $\xi_2=\cdots=\xi_s=1$.
Suppose that $c\neq0$. Since $\|f^*\|_\infty\leq \kappa\|f^*\|<\infty$, we have 
\begin{align*}
 f^*(x_1,\dots,x_s)
& =
 f^*(\xi_1x_1,x_2,\dots,x_s)+c \\
& =
 f^*(\xi_1^2x_1,x_2,\dots,x_s)+2c \\
& = 
 f^*(\xi_1^kx_1,x_2,\dots,x_s)+kc \rightarrow \sign(c)\times\infty\ \  (k\rightarrow\infty). 
\end{align*}
Thus, $c=0$ should hold. Again we have
\begin{align*}
 f^*(x_1,\dots,x_s)
 =
 f^*(\xi_1x_1,x_2,\dots,x_s)
 =
 f^*(\xi_1^2x_1,x_2,\dots,x_s)\rightarrow f^*(0,x_1,x_2,\dots,x_s). 
\end{align*}
This means that the function $f^*$ does not depend on $x_1$. This is the contradiction. 
Therefore, we have $L((\bm{\xi}_1,\0);f^*)>L(f^*)$ for $\bxi_1\in[0,1]^s$ if $\bm{\xi}_1\neq\1$. 
Moreover, if the function $L((\bm{\xi}_1,\0);f^*)$ is continuous w.r.t. $\bm{\xi}_1$,
the condition (a) in Assumption~\ref{Problem_Setup} holds, because the set
$\{\bxi_1\in[0,1]^s\,:\,\|\bxi_1-\bxi_1^*\|_2\geq \varepsilon\}$ is a compact set. 
We can prove the continuity of $L(\bm{\xi};f^*)$ from
the boundedness of the derivatives of $f$ and the Lebesgue's dominated convergence theorem. 


\subsection{Proof of Lemma~\ref{prop:conv-operator_Assump2}}

 As shown in Section~\ref{subsec:PowerSeriesKernels}, 
 the inequality $\|f_{\bm{\xi}}\|\leq \|f\|$ holds for $f\in\mathcal{H}$ and $\bm{\xi}\in[0,1]^d$. 
 The norm $\|\widehat{C}\|$ is stochastically bounded because $\|\widehat{C}\|\leq \|\widehat{C}-C\|+\|C\|$. 
 Also, $\|\widehat{g}\|$ and $\|\widehat{f}\|$ are stochastically bounded. 
  Then, we have 
\begin{align*}
 |\widehat{L}(\bm{\xi};\widehat{f})-\widehat{L}(\bm{\xi};f^*)|
 &\leq 
 \frac{1}{2}|\<\widehat{f}_{\bm{\xi}},\widehat{C}(\widehat{f}_{\bm{\xi}}-f^*_{\bm{\xi}})\>|
 +
 \frac{1}{2}|\<\widehat{f}_{\bm{\xi}}-f^*_{\mathcal{\bm\xi}},\widehat{C}f^*_{\bm{\xi}}\>|
 +|\<\widehat{g},\widehat{f}_{\bm{\xi}}-f^*_{\mathcal{\bm\xi}}\>|  \\
 &\leq
 \frac{1}{2}\|\widehat{f}_{\bm{\xi}}\|\|\widehat{C}\|\|\widehat{f}_{\bm{\xi}}-f^*_{\bm{\xi}}\|
 +
 \frac{1}{2}\|f_{\bm{\xi}}^*\| \|\widehat{C}\|\|\widehat{f}_{\bm{\xi}}-f^*_{\bm{\xi}}\|
 +\|\widehat{g}\|\|\widehat{f}_{\bm{\xi}}-f^*_{\mathcal{\bm\xi}}\| \\
 &\leq
  \frac{1}{2}\|\widehat{f}\|\|\widehat{C}\|\|\widehat{f}-f^*\|
 +
 \frac{1}{2}\|f^*\| \|\widehat{C}\| \|\widehat{f}-f^*\| +\|\widehat{g}\|\|\widehat{f}-f^*\|. 
\end{align*}
  The upper bound does not depend on $\bxi$ and converges to zero in probability
  due to Assumption~\ref{assump:uniform-consistency}. Hence, we have 
\begin{align*}
 \sup_{\bm{\xi}\in[0,1]^d}|\widehat{L}(\bm{\xi};\widehat{f})-\widehat{L}(\bm{\xi};f^*)| = o_P(1). 
\end{align*}
  Let us consider the difference $|\widehat{L}(\bm{\xi};f^*)-L(\bm{\xi};f^*)|$.
  Its supremum w.r.t $\bxi$ is bounded above by
\begin{align*}
 \sup_{\bxi\in[0,1]^d}|\widehat{L}(\bm{\xi};f^*)-L(\bm{\xi};f^*)|
 &=
 \sup_{\bxi\in[0,1]^d}\left|
 \frac{1}{2}\<f_{\bm{\xi}}^*,(\widehat{C}-C)f_{\bm{\xi}}^*\>-\<\widehat{g}-g,f_{\bm{\xi}}^*\>
 \right| \\
 &\leq
 \sup_{\bxi\in[0,1]^d}\left\{
 \frac{1}{2}\|f_{\bm{\xi}}^*\|^2\|\widehat{C}-C\|+\|f_{\bm{\xi}}^*\| \|\widehat{g}-g\|
 \right\}\\
 &\leq
 \frac{1}{2}\|f^*\|^2\|\widehat{C}-C\|+\|f^*\| \|\widehat{g}-g\|. 
\end{align*}
  Since $\|\widehat{C}-C\|$ and $\|\widehat{g}-g\|$ converge to zero as $n\rightarrow\infty$, 
  the uniform convergence condition (b) in Assumption~\ref{assump:lossfunc} holds.

\subsection{Proof of Lemma~\ref{prop:conv-operator_Assump3}}

For simplicity, we assume $\mathcal{I}=\{I\}$ and $\mathcal{J}=\{J\}$, i.e, both families contain only one
subset and that both $\|h_I\|_\infty$ and $\|\bar{h}_J\|_\infty$ are bounded above by $1$. 
Let us define $I=\{i_1,\ldots,i_a\}$ and $J=\{j_1,\ldots,j_b\}$. 
  For $I$ and $J$, let $\widetilde{I}=\{i\}\cup{I}$ and $\widetilde{J}=\{i\}\cup{J}$. 
  Then, for $f\in\mathcal{H}$, we have
    \begin{align*}
     \frac{\partial}{\partial\xi_i}\frac{1}{2}\<f_{\bxi},\widehat{C}f_{\bxi}\>
     &=
     \frac{1}{n}\sum_{\ell=1}^{n}
     h_I(\z_\ell)
     \frac{\partial}{\partial\xi_i}  \frac{1}{2} (\partial_{I}f(\bxi\circ\z_\ell) \xi_{i_1}\cdots\xi_{i_a})^2  \\
     &=
     \frac{1}{n}\sum_{\ell=1}^{n}
     h_I(\z_\ell)
     \partial_{I}f(\bxi\circ\z_\ell) \xi_{i_1}\cdots\xi_{i_a}\\
     &\qquad\qquad \times
     \left(
     \partial_{\widetilde{I}}f(\bxi\circ\z_\ell)z_{\ell,i}\xi_{i_1}\cdots\xi_{i_a}
    +
     \partial_{I}f(\bxi\circ\z_\ell)
    \frac{\partial}{\partial\xi_i}\xi_{i_1}\cdots\xi_{i_a}
     \right),  \\
     \frac{\partial}{\partial\xi_i}\<\widehat{g},f_{\bxi}\> 
     &=
     \frac{1}{n'}\sum_{\ell=1}^{n'}\bar{h}_J(\z_\ell')
         \frac{\partial}{\partial\xi_i} \partial_{J}f(\bxi\circ\z_{\ell}') \xi_{j_1}\cdots\xi_{j_b}   \\
     &=
     \frac{1}{n'}\sum_{\ell=1}^{n'}
     \bar{h}_J(\z_\ell')
     \bigg(
     \partial_{\widetilde{J}}f(\bxi\circ\z_\ell')z_{\ell,i}' \xi_{j_1}\cdots\xi_{j_b}
     +
     \partial_{J}f(\bxi\circ\z_\ell')    \frac{\partial}{\partial\xi_i}\xi_{j_1}\cdots\xi_{j_b} 
     \bigg), 
    \end{align*}
    for $\z_\ell=(z_{\ell,1},\ldots,z_{\ell,d})$ and $\z_\ell'=(z'_{\ell,1},\ldots,z'_{\ell,d})$. 
    Since $\z\in(-1,1)^d$ and $\bxi\in[0,1]^d$ for
    $\widehat{L}(\bxi;f)=\frac{1}{2}\<f_{\bxi},\widehat{C}f_{\bxi}\>-\<\widehat{g},f_{\bxi}\>$, we have  
     \begin{align*}
      &\phantom{=}
      \sup_{\bxi\in[0,1]^d}
      \left|\frac{\partial}{\partial\xi_i}\widehat{L}(\bxi;\widehat{f})-\frac{\partial}{\partial\xi_i}\widehat{L}(\bxi;f^*)\right| \\
      &\leq
      \|\partial_{I}\widehat{f} \partial_{\widetilde{I}}\widehat{f}-\partial_{I}f^* \partial_{\widetilde{I}} f^*\|_\infty
      +
      a\|(\partial_{I}\widehat{f})^2-(\partial_I{f^*})^2\|_\infty
      +\|\partial_{\widetilde{J}}\widehat{f} - \partial_{\widetilde{J}}f^*\|_\infty
      +b \|\partial_{J}\widehat{f}-\partial_{J}f^*\|_\infty\\
      &\leq
      (1+a)\kappa^2(\|\widehat{f}\|+\|f^*\|)\|\widehat{f}-f^*\|+(1+b)\kappa\|\widehat{f}-f^*\|=o_P(1), 
     \end{align*}
    where the inequality $\|\partial_{I}f\|_\infty\leq \kappa\|f\|$ was used. 
    Then, the condition (a) in Assumption~\ref{assump:loss_deriv} holds. 

    Let us consider the condition (b) in Assumption~\ref{assump:loss_deriv}. 
    Define $u(\z;\bxi)$ and $v(\z;\bxi)$ as
     \begin{align*}
      u(\z;\bxi)
      &=
     \partial_{I}f^*(\bxi\circ\z) \xi_{i_1}\cdots\xi_{i_a}
     \left(
      \partial_{\widetilde{I}}f^*(\bxi\circ\z)z_{i}\xi_{i_1}\cdots\xi_{i_a}
      +
      \partial_{I}f^*(\bxi\circ\z)
      \frac{\partial}{\partial\xi_i}\xi_{i_1}\cdots\xi_{i_a}
      \right),  \\
      v(\z;\bxi)
      &=
      \partial_{\widetilde{J}} f^*(\bxi\circ\z)z_i \xi_{j_1}\cdots\xi_{j_b}
      +
      \partial_{J}f^*(\bxi\circ\z)\frac{\partial}{\partial\xi_i}\xi_{j_1}\cdots\xi_{j_b}. 
     \end{align*}
    Then, we have
    \begin{align*}
     &\phantom{=}\frac{\partial}{\partial\xi_i}\widehat{L}(\bxi;f^*)     -\frac{\partial}{\partial\xi_i}L(\bxi;f^*) \\
     &=
     \frac{1}{n}\sum_{\ell=1}^{n}\left\{u(\z_\ell;\bxi)-\int u(\z;\bxi) p(\z)d\z\right\}
     +\frac{1}{n'}\sum_{\ell'=1}^{n'}\left\{v(\z_{\ell'};\bxi)-\int v(\z;\bxi) q(\z)d\z\right\}. 
    \end{align*}
    The uniform convergence property is related to the covering number of the function sets, 
    $\mathcal{U}=\{u(\z;\bxi)\,|\,\bxi\in[0,1]^d\}$ and $\mathcal{V}=\{v(\z;\bxi)\,|\,\bxi\in[0,1]^d\}$.
    From the inequalities such as $\|\partial_{I}f\|\leq \kappa\|f\|$, we find that the following inequalities holds: 
    \begin{align*}
     \sup_{\z\in\mathcal{Z},\bxi\in[0,1]^d}\bigg|\frac{\partial}{\partial{\xi}_j}u(\z;\bxi)\bigg|\leq \gamma \|f^*\|^2,\quad
     \sup_{\z\in\mathcal{Z},\bxi\in[0,1]^d}\bigg|\frac{\partial}{\partial{\xi}_j}v(\z;\bxi)\bigg|\leq \gamma \|f^*\|, 
    \end{align*}
  where $\gamma$ is a positive constant. 
    Let us define $N_p(\mathcal{F},r)$ be the converging number of the set $\mathcal{F}$ under the $p$-norm.
    Then, for $C_{\mathrm{Lip}}=\gamma \|f^*\|^2+\gamma \|f^*\|$, we have
    \begin{align*}
     \max\{N_\infty(\mathcal{U},r),   N_\infty(\mathcal{V},r)\}
     \leq N_2([0,1]^d,r/C_{\mathrm{Lip}})\leq
     \bigg(\frac{C_{\mathrm{Lip}}\sqrt{d}}{r}\bigg)^d. 
    \end{align*}
    Lemma 2.1 and 3.1 of \cite{Book:VanDeGeer:EmpiricalProcess} ensures the uniform law of large numbers 
    over $\mathcal{U}$ and $\mathcal{V}$.
    A simple calculation yields that the condition (b) in Assumption~\ref{assump:loss_deriv} 
    holds with the convergence rate $\delta_n'=\sqrt{\log{n}/n}$. 


\subsection{Proof of Lemma~\ref{prop:conv-operator_Assump3_add}}

 As shown in \eqref{eqn:deriv-inclusion}, we have $\frac{\partial{f_{\bxi}}}{\partial{\xi_i}}\in\mathcal{H}$ 
 for $f\in\mathcal{H}$ and $\bm{\xi}\in(0,1)^d$.
 Since 
 $\frac{\partial}{\partial\xi_i}\widehat{L}(\bxi;f)$ is continuous on $[0,1]^d$,
 the supremum of 
 $\sup_{\bm{\xi}\in[0,1]^d}|\frac{\partial}{\partial\xi_i}
 \widehat{L}(\bm{\xi};\widehat{f})-\frac{\partial}{\partial\xi_i} \widehat{L}(\bm{\xi};f^*)|$
 in Assumption~\ref{assump:loss_deriv} 
 can be replaced with the supremum on the open hypercube $(0,1)^d$. 
 Using \eqref{eqn:appendix:deriv-quad-empiricalloss}, 
 the derivative of the empirical quadratic loss is expressed as 
\begin{align*}
 \frac{\partial}{\partial\xi_i}\widehat{L}({\bm{\xi}};f)
 &=
 \frac{1}{2}\big\<\frac{\partial f_{\bm{\xi}}}{\partial\xi_i},\widehat{C}f_{\bm{\xi}}\big\>
 +
 \frac{1}{2}\big\<f_{\bm{\xi}},\widehat{C}\frac{\partial f_{\bm{\xi}}}{\partial\xi_i} \big\>
 -\big\<\widehat{g},\frac{\partial f_{\bm{\xi}}}{\partial\xi_i}\big\>
\end{align*}
 for $\bxi\in(0,1)^d$. 
Then, we have 
 \begin{align*}
  &\phantom{=}
  \sup_{\bxi\in(0,1)^d}
  \left|  \frac{\partial}{\partial\xi_i}\widehat{L}(\bxi;\widehat{f})-\frac{\partial}{\partial\xi_i}\widehat{L}(\bxi;f^*)\right| \\ 
  &\leq
  \sup_{\bxi\in(0,1)^d}\bigg\{
  \frac{1}{2}\big|  \big\<
  \frac{\partial \widehat{f}_{\bm{\xi}}}{\partial\xi_i}-\frac{\partial f^*_{\bm{\xi}}}{\partial\xi_i}, \widehat{C}\widehat{f}_{\bm{\xi}}
  \big\>  \big|
  +
  \frac{1}{2}\big|  \big\<\frac{\partial f^*_{\bm{\xi}}}{\partial\xi_i}, \widehat{C}(f^*_{\bm{\xi}}-\widehat{f}_{\bm{\xi}})\big\>\big|  \\
  &\phantom{\leq}\qquad \qquad \qquad 
  +\frac{1}{2}\big|\big\<
  \widehat{f}_{\bm{\xi}} - f^*_{\bm{\xi}},
  \widehat{C}\frac{\partial \widehat{f}_{\bm{\xi}}}{\partial\xi_i}
  \big\>\big|
  +
  \frac{1}{2}\big|
  \big\< f^*_{\bm{\xi}}, \widehat{C}(\frac{\partial \widehat{f}_{\bm{\xi}}}{\partial\xi_i}
  -\frac{\partial \widehat{f}_{\bm{\xi}}}{\partial\xi_i})\big\>\big|
   +
  \big|
  \<\widehat{g},\frac{\partial \widehat{f}_{\bm{\xi}}}{\partial\xi_i}-\frac{\partial f^*_{\bm{\xi}}}{\partial\xi_i} \>
  \big|
  \bigg\}  \\
  &\leq
  \left\{\kappa^2\beta (\|\widehat{f}\|+\|f^*\|)+\kappa\beta\right\}\|\widehat{f}-f^*\|=o_P(1), 
 \end{align*}
 where we used the conditions in Lemma~\ref{prop:conv-operator_Assump3_add} and the inequalities
 \begin{align*}
  \|f_{\bxi}\|_\infty\leq \kappa\|f_{\bxi}\|\leq \kappa\|f\|,\quad \bigg\|\frac{\partial{f}_{\bxi}}{\partial\xi_i}\bigg\|_\infty
  =
  \sup_{\z}\bigg|\frac{\partial{f}}{\partial z_i} (\bxi\circ\z)z_i\bigg|
  \leq
  \bigg\|\frac{\partial{f}}{\partial z_i}\bigg\|_\infty
  \leq \kappa\|f\|. 
 \end{align*}
 Assumption~\ref{assump:uniform-consistency} is also used to derive the stochastic order. 
   
 Next, let us consider the condition (b) in Assumption~\ref{assump:loss_deriv}. 
 As mentioned in the last part of Section~\ref{Properties_PSKernels}, 
 the derivative of the expected quadratic loss is expressed as 
\begin{align*}  
 \frac{\partial}{\partial\xi_i}L({\bm{\xi}};f^*)
=
 \frac{1}{2}\big\<\frac{\partial f^*_{\bm{\xi}}}{\partial\xi_i},Cf^*_{\bm{\xi}}\big\>
 +
 \frac{1}{2}\big\<f^*_{\bm{\xi}},C\frac{\partial f^*_{\bm{\xi}}}{\partial\xi_i} \big\>
 -\big\<g,\frac{\partial f^*_{\bm{\xi}}}{\partial\xi_i}\big\>
\end{align*}
for $\bxi\in(0,1)^d$. Then, we have 
 \begin{align*}
&\phantom{\leq}  \sup_{\bxi\in(0,1)^d}
  \left| 
 \frac{\partial}{\partial\xi_i}\widehat{L}({\bm{\xi}};f^*)
 -
 \frac{\partial}{\partial\xi_i}L({\bm{\xi}};f^*)  \right| \\
&  \leq
  \sup_{\bxi\in(0,1)^d}
  \left\{
  \frac{1}{2}\left|\left\<\frac{\partial f^*_{\bm{\xi}}}{\partial\xi_i},(\widehat{C}-C)f^*_{\bm{\xi}} \right\>\right|
 +
  \frac{1}{2}\left|
  \big\<f_{\bm{\xi}}^*,(\widehat{C}-C)\frac{\partial f^*_{\bm{\xi}}}{\partial\xi_i} \big\>
  \right|
  +\left|\big\<\widehat{g}-g,\frac{\partial f_{\bm{\xi}}}{\partial\xi_i}\big\>\right| \right\} \\
  & \leq
  \sup_{\bxi\in(0,1)^d}  \left\|  \frac{\partial f^*_{\bm{\xi}}}{\partial\xi_i}  \right\|
  \big(\|f^*\|    \|\widehat{C}-C\|  +   \|\widehat{g}-g\|\big), 
 \end{align*}
Since $\|\widehat{C}-C\|$ and $\|\widehat{g}-g\|$ converge to zero in probability as 
$n\rightarrow\infty$,
the second equation in Assumption~\ref{assump:loss_deriv} holds
due to the boundedness of $\sup_{\bxi\in(0,1)^d}  \left\|  \frac{\partial f^*_{\bm{\xi}}}{\partial\xi_i} \right\|$. 


\end{document}